\newcommand{\beq}{\begin{equation}}
\newcommand{\eeq}{\end{equation}}
\newtheorem{theorem}{Theorem}
\newtheorem{definition}{Definition}
\newenvironment{proof}{\noindent {\bf \em Proof:}}{\hfill $\Box$}
\newcommand{\dsafe}{d_{\mathit{safe}}}
\newcommand{\sroot}{s_{\mathit{root}}}
\newcommand{\starget}{s_{\mathit{target}}}
\newcommand{\citep}[1]{\citeauthor{#1}~(\citeyear{#1})}
\let\oldnl\nl% Store \nl in \oldnl
\newcommand{\nonl}{\renewcommand{\nl}{\let\nl\oldnl}}% Remove line number for one line
\title{Improved Safe Real-time Heuristic Search}
\author{Bence Cserna \and Kevin C. Gall \and Wheeler Ruml\\
Department of Computer Science \\
University of New Hampshire \\
Durham, NH, 03824, USA \\
\\ {\tt bence} at {\tt cs.unh.edu}, {\tt kcg245} at {\tt gmail.com}, {\tt ruml} at {\tt cs.unh.edu}}
\begin{document}
\nocopyright
\maketitle

\begin{abstract}

A fundamental concern in real-time planning is the presence of dead-ends in the state space, from which no goal is reachable.  Recently, the SafeRTS algorithm was proposed for searching in such spaces.  SafeRTS exploits a user-provided predicate to identify safe states, from which a goal is likely reachable, and attempts to maintain a backup plan for reaching a safe state at all times.  In this paper, we study the SafeRTS approach, identify certain properties of its behavior, and design an improved framework for safe real-time search.  We prove that the new approach performs at least as well as SafeRTS and present experimental results showing that its promise is fulfilled in practice.

% mention that we exploit the real-time bound for higher performance in the new framework

%   The contribution of this work is three fold. 
   
   % It is more efficient to do A* first then prove safety
   % 
   % It is beneficial to show that some nodes are unsafe
   
\end{abstract}

\section{Introduction}

% Intro real-time planning (this is taken from SRTS should be rewritten)

Systems that interact with the external physical world often must be controlled in real time. Examples include systems that interact with humans and robotic systems, such as autonomous vehicles. 
In this paper, we address real-time planning, where the planner must return the next action for the system to take within a specified wall-clock time bound. 

% READ ME:
Providing real-time heuristic search algorithms that are complete in domains with dead-end states is a challenging problem.
Traditional real-time planners are inherently incomplete due to the limited time available to make a decision even when the state space is fully observable and the actions are deterministic.
%
% Importance of safety
%
%
% Say we think agent center search is more suited for this type of stuff as we rarely 
%
\citep{cserna:ade} proposed the first real-time heuristic search method, SafeRTS, that is able to reliably reach a goal in domains with dead-ends. Prior real-time methods focus their search effort on a single objective that minimizes the cost to reach the goal. A single objective is insufficient to provide completeness and minimize the time to reach the goal as these often contradict each other.
% READ ME:
Thus, SafeRTS distributes the available time between searches optimizing the independent objectives of safety and finding the goal.

% Summarize the contributions

The contribution of this work is four-fold.
First, we argue that benchmark domains used for real-time planning may not be good indicators of performance in the context of safe real-time planning. We present a new set of benchmarks that overcome the deficiencies of previous benchmark domains.
Second, we show how to utilize meta information presented by safety oriented real-time search methods to reduce redundant expansions during both the safety and goal-oriented searches. This improvement marginally reduces the goal achievement time (GAT) while it does not increase the space and time complexity of the safe real-time search method.
Third, we prove inefficiencies in the approach taken by SafeRTS by examining properties of local search spaces and the changing priority of which nodes to prove safe as an LSS grows.
Lastly, we introduce a new framework for safe real-time heuristic search that utilizes the time bound unique to real-time planning.
This framework follows the same basic principle of search effort distribution as SafeRTS but does so more efficiently. We empirically demonstrate the potential of the new framework.

\section{Preliminaries}

% Intro in dead.tex

% \subsection{Agent Centered Real-time Heuristic Search}

Heuristic search methods use a heuristic function $h(s)$ to estimate the cheapest cost $c(s, s_{goal})$ to move from any state $s$ to a goal state $s_{goal}$. A* and other offline or anytime methods construct a full path from the agent start state $s_{start}$ to $s_{goal}$ before committing the agent along any path, however real-time search methods conform to hard time bounds in which they must commit the agent to actions even if no complete path to a goal has been discovered.

A* achieves optimality by expanding nodes in the search graph ordered on $f(s) = g(s) + h(s)$ where $g(s) = c(s_{start}, s)$. In the real-time setting, $g$ as a function relative to $s_{start}$ becomes problematic. As the agent is committed to actions that lead it further away from $s_{start}$, the notion of $g(s)$ becomes less relevant to a state $s$. Agent-centered real-time search is a form of real-time search that focuses exploration and learning in the immediate area around the agent, often using a bounded lookahead to construct a ``Local Search Space." The pioneering work on LRTA* \cite{korf:rth} describes techniques for planning under resource constraints and updating, or ``learning," heuristic values around the agent as it moves through the state space. Our non-safety-oriented baseline, LSS-LRTA*, is built on these core ideas. Pseudocode is sketched in Algorithm \ref{lss-lrta}.

\begin{algorithm}
\SetKwInOut{Input}{Input}

\Input{$s_{root}, bound$}
$s_{current} \gets s_{root}$ \\
\While{$s_{current}$ is not goal}{
    perform $bound$ expansions of A* with $s_{current}$ as root \\
    if $open$ becomes empty, terminate with failure \\
    $s \gets $ node on $open$ with lowest $f$ \\
    update $h$ values of nodes in $closed$ \\
    commit to actions along path from $s_{current}$ to $s$ \\
    $s_{current} \gets s$
}

\caption{LSS-LRTA* \label{lss-lrta}}
\end{algorithm}

The algorithm proceeds in 2 phases: planning and learning. The planning phase is similar to A*: expanding nodes in best-first order, preferring low $f$ values where the root of the search $s_{root}$ is set as the agent's current state. To obey the real-time bound, only a pre-specified number of nodes are expanded, forming a local search space. In the learning phase, a Dijkstra-like propagation updates the heuristic values of all expanded nodes backwards from the search frontier. The agent then commits to all the actions leading to the top node on the A* open list. If one or more goal states are discovered during planning, the agent commits to the best path to a goal. If the open list becomes empty, a goal state is not reachable and we say the agent has failed.

LSS-LRTA* is complete if the domain is finite, $h$ is consistent and the cost of every action is bounded from below by a constant \cite{koenig:crt}. It works in directed state spaces with non-uniform costs and can handle planning in initially unknown environments.

\subsection{Safety in Heuristic Search}

A dangerous limitation of most real-time search methods is that in directed domains, no resources are spent on ensuring that the path being committed does not lead to a dead-end. If a terminal state $s_t$ (i.e. one with no successors) is just beyond the expanded search frontier, the agent may still commit actions toward an immediate predecessor of $s_t$.

In their work on safe real-time search, \citep{cserna:ade} introduced the notion of safety as a way of evaluating which states are less likely to lead to dead-ends. Here we expand on this notion and provide formalized definitions for safety concepts.

\begin{definition}
Any path $\mathrm{p}$ in the set of all possible paths through the state space $\mathrm{P}$ is a sequence of nodes such that node $\mathrm{n_{i+1}}$ is a successor of $n_i$ terminating in some arbitrary node $\mathrm{n_k}$.
\begin{align*}
    p =& \langle n_1, n_2, \dots n_k \rangle \in P \Leftrightarrow \\
    &\forall_{i \in [1 \dots k]} n_i \in N \wedge \forall_{i \in [1 \dots k - 1]} n_{i+1} \in \textit{successor}(n_i) 
\end{align*}

\end{definition}

\begin{definition}
A node n is \textup{safe} iff there exists a path $\mathrm{p}$ that begins at n and ends with a goal.
\[ n \text{ is } \mathit{safe \: iff} \, \exists \, p \in P : \, p_1 = n \wedge \mathit{isGoal}(p_{|p|}) \]
\end{definition}

\begin{definition}
A \textup{dead-end} node is a node $\mathrm{n}$ from which there is no path $\mathrm{p}$ to a goal.
\[ n \text{ is } \text{dead-end} \: \mathit{iff} \, \neg \, \exists \, p \in P : p_1 = n \land \mathit{isGoal}(p_{|p|}) \]
\end{definition}

\noindent A node that is likely to be safe according to some criterion we will denote $\mathit{safe}_L$.

\begin{definition}
A safety predicate $\mathrm{f_{safe}}$ determines whether a given node is likely to be safe or its safety property is unknown.
\[f_{\mathit{safe}}:  N \rightarrow \{\mathit{safe}_L, \mathit{unknown}\}\]
\end{definition}

\noindent $f_\mathit{safe}$ is a user provided function without any particular guarantees, merely to guide the search algorithm towards states that are \textit{likely} to not lead to dead-end states.

\begin{definition}
A safety predicate $\mathrm{f_{safe}}$ is strong iff there exist a path $\mathrm{p}$ to the goal from every node $\mathrm{n}$ that is marked $\mathrm{safe_L}$ by the function.
\begin{align*}
\forall n \in& N : f_{\mathit{safe}}(n) \rightarrow \mathit{safe}_L \\
&\exists \, p \in P : p_1 = n \wedge \mathit{isGoal}(p_{|p|})
\end{align*}

\end{definition}

\begin{definition}
A node {\normalfont n} is \textup{explicitly safe} if {\normalfont $f_\mathit{safe}(n) = \mathit{safe}_L$}. A node $n'$ is \textit{implicitly safe} if it is a predecessor of a \textit{safe} node $n$.
\[ (f_\mathit{safe}(n) = \mathit{safe}_L) \wedge \exists \, p \in P : p_1 = n' \wedge n \in p \]
\end{definition}

\noindent We refer to both \textit{explicitly safe} and \textit{implicitly safe} nodes simply as \textit{safe} when the distinction is unimportant.

\begin{definition}
A safety proof of a node {\normalfont n} is a path {\normalfont p} that begins at {\normalfont n} and ends at a safe node.
\[ \mathit{proof}(n) = p \in P \: : p_1 = n \land f_\mathit{safe}(p_{|p|}) = \mathit{safe}_L \]

\end{definition}

\begin{definition}
$\mathrm{proof^*(n)}$ is an optimal safety proof of $\mathrm{n}$ if it has minimum number of states. 
\end{definition}

\begin{definition}
A safety heuristic function $\mathrm{d_{safe}(n)}$ is a function that estimates the minimum distance in state transitions between $\mathrm{n}$ and any \textup{safe} node $\mathrm{n'}$.
\[ d_\mathit{safe} : N \rightarrow \mathbb{N}^0 \]
\end{definition}

\noindent $d_{safe}$ is a user-defined heuristic function that does not require any suboptimality bound on its estimate of the distance to a safe state.

\subsection{SafeRTS}

 SafeRTS \cite{cserna:ade} is roughly similar to LSS-LRTA*, but with key differences in how resources are allocated in planning and how target states are selected. Pseudocode is provided in Algorithm \ref{alg:srts}.

\begin{algorithm}[t]
\SetKwInOut{Input}{Input}
\Input{$s_{root}, bound$}

\While{$\mathit{s_{root}} \neq \mathit{s_{goal}}$}{
$C \leftarrow \emptyset $\\
$b \leftarrow 10$ $\lhd$ initialize expansion budget\\
\While{expansion limit {\em bound} is not reached}{
  perform \textsc{Astar} for $b$ expansions \\
    \Indp
    \nonl adding any safe node discovered to $C$ \\
    \Indm
  perform \textsc{Best-First Search} on $\mathit{d_{safe}}$ \\
    \Indp 
    % \nonl from the top node $t$ of {\em open} \\
    % \nonl until safe node $c$ is found \\
    % \nonl or until $b$ expansions \\
    \Indm

  \If{such $c$ found}{
  cache safety of nodes in path from $t$ to $c$ \\
  $b \leftarrow 10$ $\lhd$ reset budget \\
  $C \leftarrow C \cup \{t\}$ \\
  }
  \Else {
  $b \leftarrow 2 * b$ $\lhd$ increase budget\\
  }
}
\For{$c \in C$}{ 
propagate safety to ancestors of $c$ \\
choose node $s$ in {\em open} with lowest $f$ value that has $\mathit{s_{safe}}$ safe predecessor \\
}
\If{such $s$ and $\mathit{s_{safe}}$ exists}{
$\mathit{s_{target}} \leftarrow \mathit{s_{safe}}$ \\
}
\ElseIf{identity action is available at $\sroot$}{
$\starget \leftarrow \sroot$  $\lhd$ apply identity action\\
}
\Else{
\textsc{Terminate} $\lhd$ no safe path is available 
}
use \textsc{Dijsktra} to update $h$ values of the nodes \\
move the agent along the path from $\sroot$ to $\starget$\\
$\sroot \leftarrow \starget$
}
\caption{SafeRTS}
\label{alg:srts}
\end{algorithm}

SafeRTS splits planning resources between exploration for a goal via best-first search on $f$ and attempting to prove promising nodes as safe using $\dsafe$. The planning phase alternates between these two tasks: first, nodes are expanded as in A* to some expansion limit $b$. Then, the node on top of the open list is selected as the target of the safety proof. Nodes are expanded using a best-first search on $\dsafe$, starting at the target node until the same node expansion limit $b$ is reached or until a \textit{safe} node is discovered. Nodes expanded by the proving stage are not added to the search tree because their $g$ values are not based on the agent's location. If no \textit{safe} node was discovered in the proving stage, the resource limit $b$ is doubled and the algorithm switches back to goal-finding with expansions on $f$. Once $bound$ total expansions occur between both exploration and proving stages, the algorithm shifts to the learning phase.

% Incomplete fixes. We need the definition of safe-toward-best
In the learning phase, the safety of all discovered safe nodes is propagated back to their ancestors. Then $h$ values are propagated back through the local search space in exactly the same manner as in LSS-LRTA*. Actions are selected based on a strategy referred to as ``safe-toward-best" whereby the agent would select as its target the deepest \textit{safe} node along the path to the node on the open list that a) has a \textit{safe} predecessor, and b) has the best $f$ value of all other nodes on the frontier with a \textit{safe} predecessor. If no nodes on the open list have a \textit{safe} predecessor, the agent takes an ``identity action" defined as an action which transitions to the exact same state, if such an action exists.

The approach of using half the expansions on proving safety instead of exploration means that the search tree of SafeRTS does not go as deep as that of LSS-LRTA*. Empirical results showed that the tradeoff was well worth the effort as SafeRTS was able to avoid dead-ends at far greater rates than the baseline LSS-LRTA*. However, the technique of switching between exploration and proving stages within the planning phase is inefficient in that nodes may be expanded in the proving stage that will be expanded in a subsequent exploration stage. Below, we will present theorems supporting this assertion and we explore alternatives. But before we discuss and evaluate new algorithms in detail, we introduce a new and more efficient benchmark domain.

%%%%%%%%%%%%%%%%%%%%%%%%%%%%%%%%%%%%%%%%
%%%%%%%%%%%%%%%%%%%%%%%%%%%%%%%%%%%%%%%%

\section{A Benchmark Domain for Evaluating Safety}

% SUMMARY:
% current domains like racetrack or game domains or non-uniform grid based domains have long term consequences thus the penalty for the decisions in not uniformly distributed. Airspace presents similar decisions to the agent over.
% We argue this is better then having many different type of decisions chained together.
% There is no scrubbing in Airspace, thus the results are not a good indicator of learning but great for safety and exploration.

% When we investigate the goal achievement time of real-time search algorithms, it is crucial to use domains that do not encourage overfitting. 
% Similar to real-time search algorithms the domains were derived from their offline counterparts.

Real-time planning algorithms construct a solution iteratively and start to move the agent immediately after the first completed iteration. In the chain of decisions, the starting point of each decision is the result of all prior decisions. We argue that it is important to balance the impact of each decision on the overall GAT. 

\begin{figure}[t]
\centering
  \includegraphics[width=0.8\linewidth]{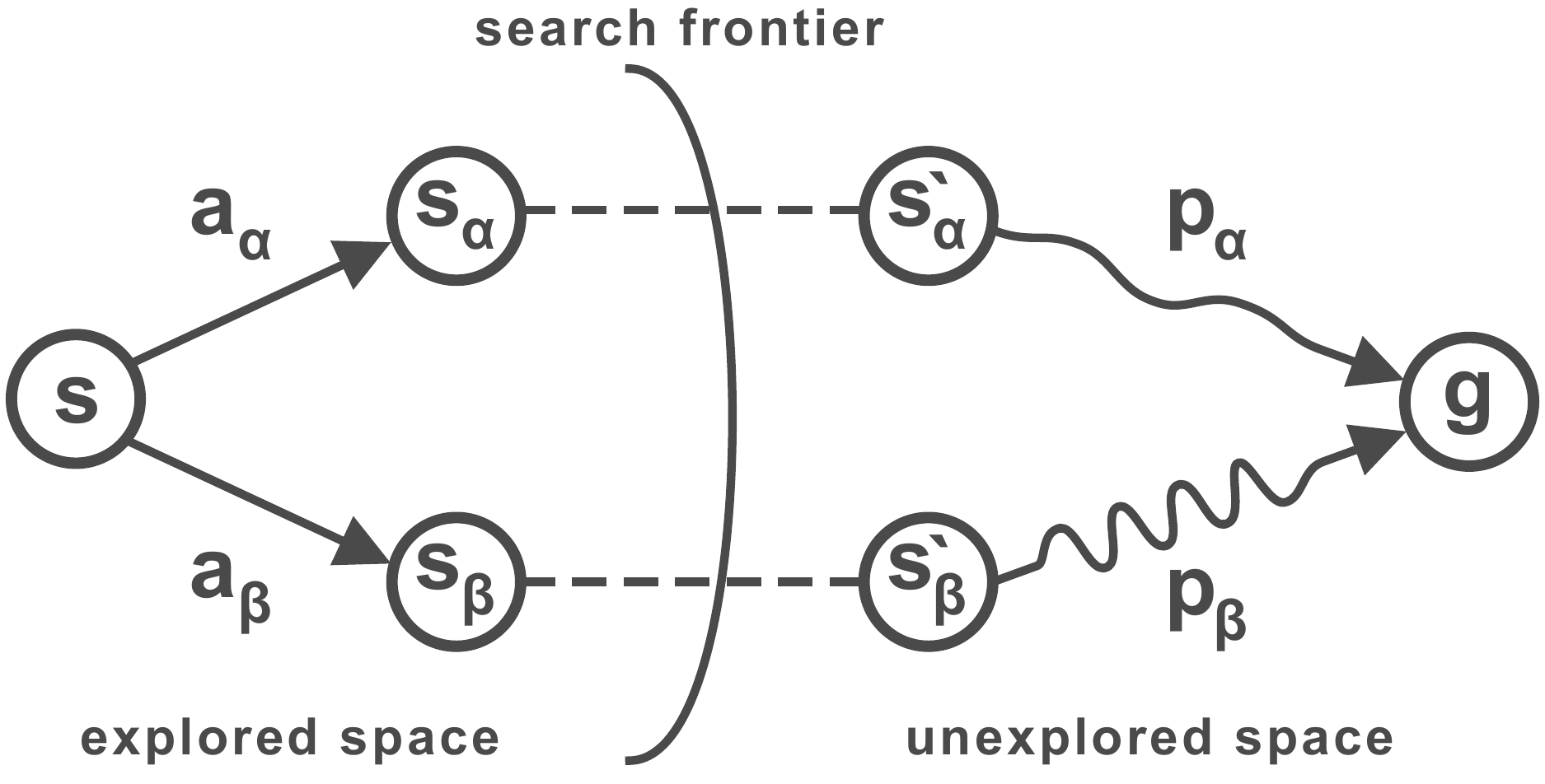}
% \vspace{-0.2cm}
\caption{Demonstration of actions with long term effects.}
\label{fig:long_term_effect}
\end{figure}

Consider the example domain in Figure~\ref{fig:long_term_effect}. The circles mark the states, the straight arrows the actions, the dotted lines two identical segments, and the squiggle arrows long segments. The segments consist of a sequence of states connected by actions. The agent is currently at state $s$ and the planner has to decide between selecting action $a_\alpha$ and $b_\beta$ which lead to states $s_\alpha$ and $s_\beta$ respectively. 
Given the real-time setting, the planner has limited time to inspect the possibilities beyond these states and would not be able to switch between the $\alpha$ and $\beta$ path after the first step.
Assuming that the dotted segments and the leading actions are identical and that the exploration does not reach $s_\alpha^{'}$ and $s_\beta^{'}$ states, deciding between the two alternative actions is not informed because $p_\beta$ is much cheaper than $p_\alpha$. 
This arbitrary decision would penalize or reward the planner despite the fact that it does not reflect intelligent decision-making. In domains where similar settings exist single actions have a disproportional impact on the GAT.

\begin{figure}[t]
\centering
% \vspace{-0.2cm}
  \includegraphics[width=\linewidth]{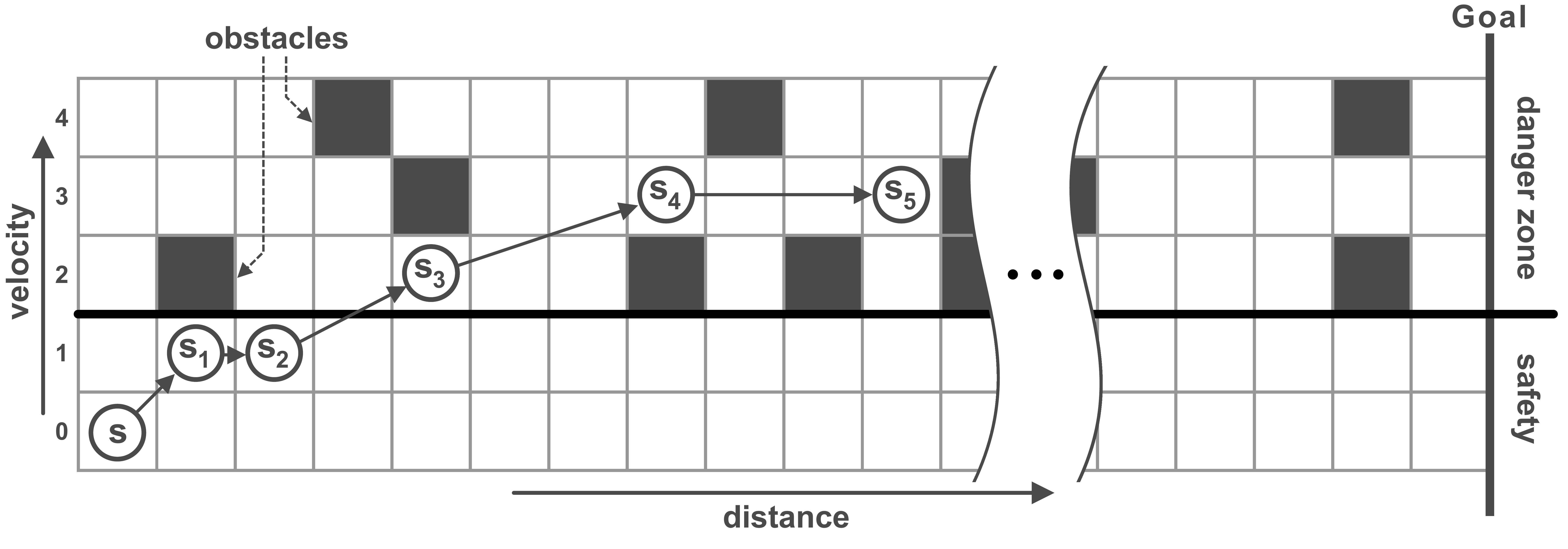}
% \vspace{-0.6cm}
\caption{Airspace domain.}
\label{fig:airspace}
% \vspace{-0.2cm}
\end{figure}

As an illustration of how to reduce the long term impact of actions in benchmark domains, we introduce a new benchmark domain called Airspace. In Airspace the state is two dimensional consisting of distance and altitude. The agent starts with zero speed and altitude. The goal is to traverse a predefined distance. In each step, the agent moves a distance matching its altitude. In other words, the higher the altitude the larger the agent's velocity towards the goal. Above altitude $1$, obstacles are blocking the way of the agent with uniform probability $p_{\mathit{obs}}$, thus making it more difficult for the agent to traverse the space with higher speeds. The agent has three possible actions: increase, keep or decrease the altitude. The agent can only take an action if it does not lead out of the bounds of the map and if the linearly interpolated state between the source and target state is not in collision.
Figure~\ref{fig:airspace} shows a sketch of the Airspace domain with an example path that demonstrates the dynamics of the agent. The agent starts from state $s$ in the bottom left corner and the goal is to cross the goal line on the right. The \textit{safety predicate} of the domain marks all states at altitudes $0$ and $1$ as safe. The \textit{safety heuristic} of a state is its altitude $- 1$. The heuristic function returns the remaining distance from a state to the finish line divided by the maximum speed.

\subsubsection{Properties of the Airspace Domain}

% Explain why we designed it
One of the key principles behind Airspace is to avoid allowing any single decision to have an outsized effect on a planner's overall performance.
%the long term impact of any particular action while ensuring that the planner is rewarded for better performance.
%
Airspace exhibits this property because each altitude layer is connected with reasonable probability to the layers above and below. While, at any one time, some of these connections may be blocked by the obstacles, there will be enough options available that the agent still has the possibility of reaching any layer in the long term. 
%README: (1 sentence)
%There could not exist a path such that the agent would not be able to reach this path with non-zero probability and thus each action has a limited impact on the GAT.
Ideally, the agent is able to return to safe low altitudes regardless of its position as there are no long separators present in the state space between high and low altitudes.
In fact, using an obstacle probability of $p_{\mathit{obs}} 0.05$ allows a perfect agent to achieve a velocity of $~13$ on average, meaning the agent can vertically traverse the domain using only a small number of actions. We recommend choosing $p_{\mathit{obs}}$, the horizontal and vertical dimensions of the domain such that the number of steps it takes to vertically traverse the domain is negligible compared to the horizontal distance.  The overall effect is that the agent is only forced to fail if it makes a series of multiple poor decisions, rather than a single uninformed blunder.

Altitude layers higher than $1$ contain dead-end states due the velocity of the agent and the obstacles in the space. The probability of hitting an obstacle at altitude $a$ taking an altitude keeping action for any such layer is $p_\mathit{collision}^a = 1 - (1 - p_{\mathit{obs}})^\mathit{a}$. Thus, higher altitudes lead to better performance, as the agent travels faster towards the goal, but they are increasingly more difficult to navigate. It is not only more difficult to find a feasible path at high altitudes, but it makes it more difficult to complete a safety proof due both to the distance from the safe states and the probability of hitting an obstacle.

\begin{table*}[]
\small
    \centering
    \begin{tabular}{rlllllllllllllllll}
    \toprule
    altitude    &   3   &   4	&   5	&   6	&   7	&   8	&   9	&   10	&   11	&   12	&   13	&   14	&   15	&   16	&   17	&   18	&   19 \\ \midrule
    safety probability &   .95	&   .94	&   .89	&   .88	&   .86	&   .80	&   .74	&   .70	&   .64	&   .58	&   .51	&   .43	&   .35	&   .27	&   .19	&   .12	&   .06	\\
	proof length &   4	&	5	&	6	&	8	&	9	&	11	&	12	&	14	&	16	&	18	&	20	&	23	&	25	&	27	&	29	&	31	&	32	\\
	successful proof expansions &   4	&	5	&	7	&	8	&	10	&	12	&	14	&	18	&	21	&	26	&	32	&	38	&	45	&	53	&	60	&	65	&	68	\\
	failed proof expansions &	1	&	1	&	1	&	1	&	2	&	2	&	2	&	3	&	4	&	5	&	8	&	10	&	13	&	14	&	14	&	11	&	7\\\bottomrule
    \end{tabular}
    \caption{Difficulty of safety proofs in the Airspace domain ($p_{\mathit{obs}} = 0.05$) averaged over 100,000 states per altitude.}
    \label{tab:airspace}
%  \vspace{-0.3cm}
\end{table*}

Table~\ref{tab:airspace} shows the probability of a state being safe, the average number of steps to reach a safe state, the average number of nodes expanded during both successful and failed proofs for each altitude. These average values were empirically measured from all states of a 100,000 long and 20 high Airspace domain with $p_{\mathit{obs}} = 0.05$.

Additionally, Airspace guarantees that the agent will not visit a state more than once, thus it eliminates scrubbing \cite{sturtevant:sdl}, focusing the benchmark on exploration and safety rather than on learning.

%%%%%%%%%%%%%%%%%%%%%%%%%%%%%%%%%%%%%%%%
%%%%%%%%%%%%%%%%%%%%%%%%%%%%%%%%%%%%%%%%

\section{Propagation of Dead-ends}

\begin{figure}[t]
\centering
  \includegraphics[width=0.8\linewidth]{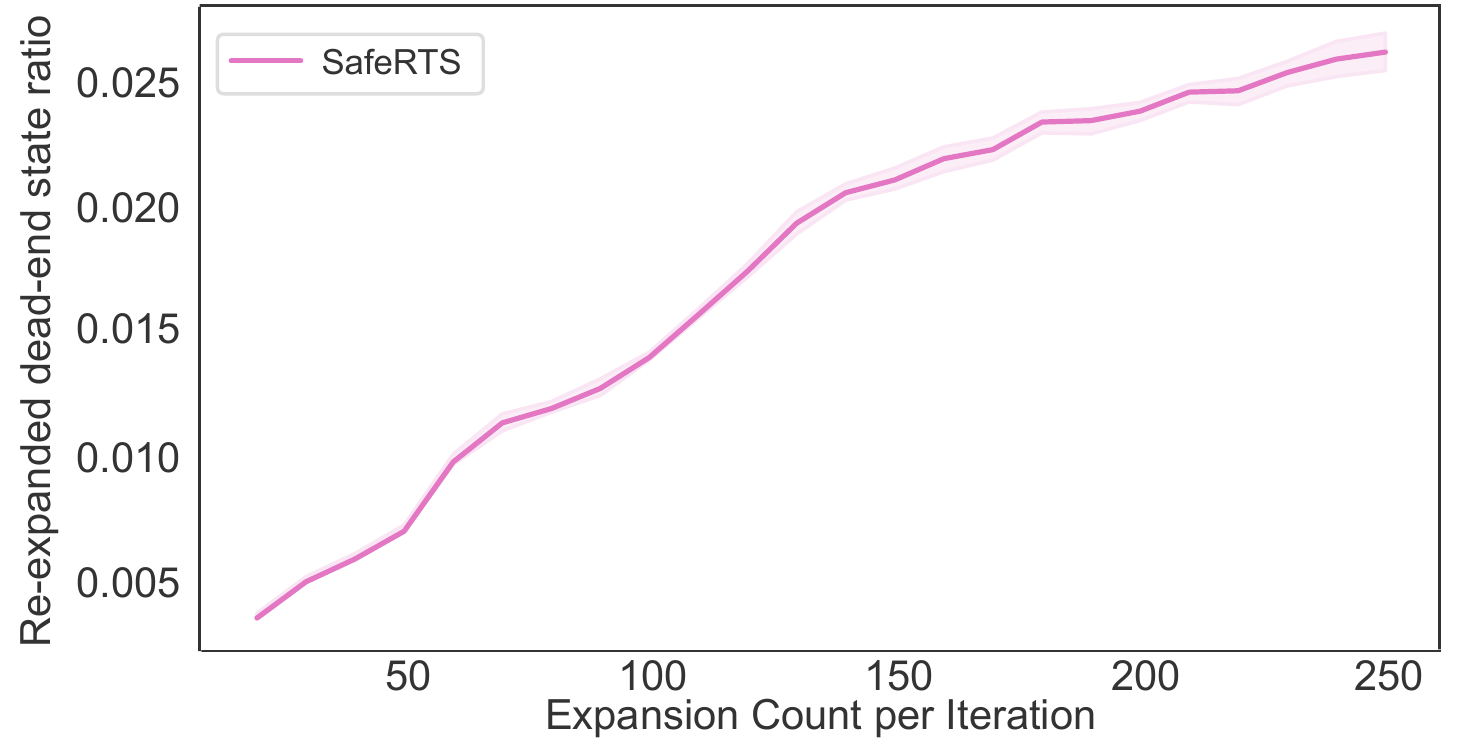}

\caption{Ratio between dead-end re-expansions and total number of node expansions.}
\label{fig:reexpansions}
% \vspace{-0.4cm}
\end{figure}

As our second contribution, we will introduce a method of using information gathered about dead ends to reduce future planning effort. Safety in real-time search relies on identifying the safety property of states using the safety predicate and propagating this information to predecessors. The converse of this problem can be formulated, given a predicate for dead-ends, as identifying and propagating dead-ends. Proving whether a state is safe or unsafe is as difficult as the original problem (e.g. consider a domain where the only safe states are the goal states). We argue that focusing on safety is more practical for planning than detecting dead-ends due to their propagation properties. While a state is considered to be implicitly safe if any of its successors are safe, a state can only be considered an implicit dead-end if all successors are proven to be dead-ends. Thus, having a safety predicate that is able to identify a subset of safe states is more practical than a similar dead-end detector.

Attempting to prove the safety property of a state has three potential outcomes. First, if a safe descendant is found then the state is safe. If the allocated budget for the safety proof is exhausted then the proof is non-conclusive: the state could be safe or unsafe. Lastly, when all descendants are expanded without leading to a safe state, then the state and all of its descendants are considered to be unsafe. We propose to cache this information to ensure that these states are not re-expanded in the future by the goal-oriented search effort or by the safety proofs. Marking states unsafe following a safety proof does not increase the space or time complexity of the algorithm, as all of the touched states have already been visited. It requires only an extra flag per state to be stored.

To assess the impact of this simple enhancement, we measure the ratio between dead-end re-expansions and the total number of expansions. In the augmented SafeRTS this quantifies the avoidable additional work.
Figure~\ref{fig:reexpansions} shows this ratio on the Airspace domain (band indicates 95\% confidence interval around the estimate). Eliminating the re-expansions would yield $0.5$ -- $2.5$\% additional expansions. The improvement did not lead to statistically significant results in our experimental setting. However, the yield is highly domain dependent. 
Our intuition is that the gain could be higher in domains where the agent has to revisit the same sub-spaces frequently and/or which contain large or recurring pockets of dead-end states.

\section{Proving Safety in Real-time Search}

Now we turn to our analysis of SafeRTS. As our third contribution, we prove that it is more efficient to allocate all resources for proving node safety at the end of the planning phase, rather than iteratively within it.

\begin{theorem} \label{thm:ez-proof}
Any safety proof that does not pass through the open list requires no expansions.
\end{theorem}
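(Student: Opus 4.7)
The plan is to argue directly from the operational meaning of node expansion in heuristic search: expanding a node $n$ generates all of $n$'s successors, so once $n$ is on the closed list its successor set is fully materialized and linked into the search graph. Viewed through this lens, the theorem is really a structural claim about where the nodes of the proof path live in the search's data structures rather than a substantive computation.

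First I would unpack the definition. A safety proof is a sequence $p = \langle n_1, n_2, \ldots, n_k \rangle$ such that each $n_{i+1}$ is a successor of $n_i$ and $f_{\mathit{safe}}(n_k) = \mathit{safe}_L$. The hypothesis that $p$ does not pass through the open list translates to the assertion that every $n_i$ with $i < k$ lies in the closed set, having already been expanded during the current planning phase, while $n_k$ is either also closed or is an unexpanded but explicitly safe node.

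Second, I would argue edge by edge along $p$. For any $1 \le i < k$, since $n_i$ has been expanded, its successor $n_{i+1}$ was generated at the time of that expansion and the edge $(n_i, n_{i+1})$ is stored in the search graph. Traversing such an edge requires only a pointer lookup, not a fresh expansion. The terminal $n_k$ is certified safe by a single evaluation of $f_{\mathit{safe}}$, which is a predicate check performed at generation time and does not count as an expansion. No edge of the proof therefore contributes any expansion, giving the bound of zero.

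The one subtlety worth stating explicitly is the distinction between \emph{discovering} such a proof --- e.g., running a reverse propagation from known-safe nodes, as the paper exploits in its later framework --- and \emph{expanding} fresh states in the sense that charges against the real-time budget; the theorem concerns only the latter. The main obstacle, then, is not mathematical but definitional: pinning down ``pass through the open list'' so that it truly captures ``no node of $p$ is awaiting expansion.'' Once that is nailed down, the result is an immediate consequence of the semantics of the open and closed lists, and its purpose is to motivate harvesting these ``free'' safety proofs after exploration has already done the generating.
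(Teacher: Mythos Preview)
Your proposal is correct and follows essentially the same approach as the paper: both argue that because every non-terminal node of the proof is closed, all needed successor edges are already materialized in the LSS, so no fresh expansions are charged. The paper's version is terser and leans on the Dijkstra-like safety backup as the mechanism that actually \emph{discovers} such a proof, whereas you spell out the edge-by-edge traversal and flag the discovery/expansion distinction explicitly; the content is the same.
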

\begin{proof}
For every node $n$ internal to the Local Search Space, all immediate successors have been discovered. This follows from the fact that an LSS is constructed by expanding nodes on the frontier, at which point those frontier nodes become internal. A safety proof is a sequence of successor nodes, the last of which is a safe node. Any proof that does not pass through the open list must terminate in some successor that has already been expanded. Since the safety of all discovered safe nodes is propagated via a Dijkstra-like backup to all predecessors which are members of the LSS, no expansions are required to discover such a proof.
\end{proof}
\begin{theorem} \label{thm:effort}
Any safety proof that passes through the open list requires additional expansions.
\end{theorem}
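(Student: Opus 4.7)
The plan is to argue directly from the structural invariant distinguishing the interior of the Local Search Space from its frontier: interior nodes have all successors generated, whereas open-list nodes have not been expanded and therefore have none of their successors yet in the search graph. I would start by fixing an arbitrary safety proof $p = \langle n_1, n_2, \ldots, n_k \rangle$ that passes through the open list, and letting $j$ be the least index with $n_j$ on the open list, so that $n_1, \ldots, n_{j-1}$ are all interior LSS nodes (if $j>1$).

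The main step is the case $j < k$. Here $n_{j+1}$ is, by Definition~1, a successor of $n_j$. Because $n_j$ sits on the open list, it has not yet been expanded, so no successor of $n_j$ has been generated during the construction of the LSS. In particular, the edge $n_j \to n_{j+1}$ is not part of any data structure yet, so no algorithm operating only on previously generated nodes (such as the Dijkstra-like safety back-propagation used in Theorem~\ref{thm:ez-proof}) could identify $n_{j+1}$, much less continue the proof through it. To traverse the proof past $n_j$, the search must generate $n_j$'s successors, which is precisely the operation of expanding $n_j$. Hence at least one additional expansion beyond those used to build the LSS is required, proving the claim in this case.

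To finish, I would dispose of the boundary case $j = k$, in which the open-list node itself is the safe endpoint of the proof. I would argue this case is essentially vacuous: if $n_k$ were already known safe prior to the proving search, then the Dijkstra-style safety back-propagation of Theorem~\ref{thm:ez-proof} would have marked $n_{j-1}$ (and then inductively $n_1, \ldots, n_{j-1}$) as implicitly safe, yielding a proof entirely within the interior of the LSS and therefore contradicting our assumption that $p$ is the first such proof that passes through the open list. Conversely, if $n_k$'s safety was not yet known, verifying it during the proving search constitutes work beyond the LSS construction, which likewise qualifies as an additional expansion under the algorithm's accounting.

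The main obstacle is purely a definitional one, namely pinning down what counts as an expansion in the ``passes through'' endpoint case; the substantive content is the one-line observation in the $j<k$ case that an open-list node's successors simply do not exist in the search graph until that node is expanded.
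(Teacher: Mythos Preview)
Your core argument---the $j<k$ case---is correct and is exactly the paper's reasoning, though the paper compresses it to a single sentence: ``If a safety proof passes through the open list, that means it is required to examine unexplored nodes which necessitates additional expansions.'' There is no further structure in the paper's proof; your decomposition into a least-index argument and an explicit appeal to the expansion invariant is simply a more careful rendering of the same idea.

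Where you diverge is the $j=k$ boundary case, and here your handling is shakier than you seem to realize. Your first sub-argument invokes ``our assumption that $p$ is the first such proof that passes through the open list,'' but you never made that assumption---you fixed an \emph{arbitrary} proof $p$ passing through the open list, so deriving a contradiction from the existence of some other proof does not dispose of $p$ itself. Your second sub-argument, that evaluating $f_{\mathit{safe}}(n_k)$ ``qualifies as an additional expansion under the algorithm's accounting,'' is a stretch: a predicate check on an already-generated node is not an expansion in the sense used elsewhere in the paper. The honest situation is that if an open-list node is itself explicitly safe, the theorem as literally stated is borderline; the paper simply reads ``passes through'' as ``extends beyond'' the frontier into unexplored territory, which makes your $j=k$ case vacuous by interpretation rather than by argument. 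You correctly flag this as definitional, but your attempted patches do not actually close it---better to note the intended reading and move on.
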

\begin{proof}
If a safety proof passes through the open list, that means it is required to examine unexplored nodes which necessitates additional expansions.
\end{proof}

%Theorem \ref{thm:ez-proof} demonstrates that we are only concerned with proofs that pass through the open list.

\begin{theorem} \label{thm:better-proof}
For each node $\mathrm{x}$ internal to the LSS (i.e. already closed) whose safety status cannot be proven without passing through the open list, there exists a node $\mathrm{y}$ on the open list such that proving the safety of $\mathrm{y}$ is strictly less difficult than proving the safety of $\mathrm{x}$.
\end{theorem}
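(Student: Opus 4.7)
The plan is to prove the theorem constructively: given an internal $x$ that cannot be shown safe without touching the open list, I would exhibit a specific witness $y$ by inspecting any optimal safety proof of $x$. Concretely, fix an optimal proof $\mathrm{proof}^*(x) = \langle n_1, n_2, \ldots, n_k\rangle$ with $n_1 = x$ and $n_k$ safe. By hypothesis, this proof is forced to pass through at least one open-list node, so I would pick the smallest index $j$ such that $n_j$ lies on the open list and define $y := n_j$. Because $x$ is internal while $y$ is on the open list, $x \neq y$, and hence $j \geq 2$.

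Next I would observe that the suffix $\langle n_j, n_{j+1}, \ldots, n_k\rangle$ is itself a sequence of successor transitions ending at a safe node, so it satisfies the definition of a safety proof for $y$. This immediately bounds the optimal proof of $y$ in length:
\[
|\mathrm{proof}^*(y)| \;\leq\; k - j + 1 \;\leq\; k - 1 \;<\; k \;=\; |\mathrm{proof}^*(x)|.
\]
So the shortest safety proof for $y$ is strictly shorter than the shortest safety proof for $x$.

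The final step is to convert this length comparison into the claim about difficulty. Here I would invoke Theorems~\ref{thm:ez-proof} and~\ref{thm:effort}: the only expansions that a safety proof actually incurs are those performed past the open-list frontier, and the number of such expansions needed to certify the safety of a node is directly governed by the length of the proof segment that lies outside the LSS. For $y$, the out-of-LSS segment has length $k - j$, whereas any proof of $x$ must (a) reach some open-list node and (b) continue from there into unexplored territory, so its out-of-LSS segment is at least as long as some such suffix --- and specifically the unexplored portion of $\mathrm{proof}^*(x)$ past $y$ is $n_j, \ldots, n_k$, giving $x$ an out-of-LSS expansion count at least equal to that of $y$ plus the additional commitment to reach $y$ in the first place. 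Thus proving $y$ is strictly less difficult.

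The main obstacle I anticipate is exactly this last bridge: ``difficulty'' is not formally quantified in the paper, so the argument must be careful to frame it in terms that the preceding two theorems have already licensed, namely expansions of unexplored (open-list or beyond) nodes. The cleanest honest presentation is therefore to let proof length serve as the formal proxy for difficulty, derive strict inequality from the $j \geq 2$ observation, and use Theorems~\ref{thm:ez-proof}--\ref{thm:effort} to justify why a strictly shorter optimal proof entails strictly fewer forced expansions.
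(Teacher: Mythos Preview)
Your approach is essentially the same as the paper's: pick an optimal safety proof of $x$, locate an open-list node $y$ on it, and use the suffix as a witness that $|\mathit{proof}^*(y)| < |\mathit{proof}^*(x)|$. Your version is in fact slightly more careful than the paper's (you write $|\mathit{proof}^*(y)| \le k-j+1$ rather than asserting equality, and you explicitly take the \emph{first} open-list node), and your final paragraph on bridging proof length to ``difficulty'' via Theorems~\ref{thm:ez-proof}--\ref{thm:effort} makes explicit a step the paper leaves implicit.
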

\begin{proof}
If $x$ is not a dead-end \textit{w.l.o.g.}, let $\mathit{proof}^*(x) = \langle x, x_1, \dots, x_{k - 1}, y, \dots, x_{(k + l - 1)}, z \rangle$ where $x_i$ is the $i$th successor of $x$, $y$ is a node on the open list, and $z$ is a \textit{safe} node. Note that $y$ is $k$ state transitions away from $x$, and $z$ is $l$ transitions from $y$.
It is trivial to see that $\lvert\mathit{proof}^*(x)\rvert = k + l + 1$ and $\lvert\mathit{proof}^*(y)\rvert = l$.
$k$ and $l$ represent a number of state transitions, which by thair nature must be $\geq 0$. Since $x$ is not on the open list, $x \neq y \therefore k > 0$, meaning $|\mathit{proof}^*(x)| > |\mathit{proof}^*(y)|$.
Note that in the edge case where $y = z$, meaning $l = 0$, the above statement still holds.
\end{proof}

\begin{theorem}
Given a graph $\mathrm{G}$ with internal node $\mathrm{x}$ and frontier node $\mathrm{y}$, and given $\mathrm{proof(y)} \subset \mathrm{proof(x)}$, $\mathrm{proof(y)}$ is equivalently impactful as $\mathrm{proof(x)}$.
\end{theorem}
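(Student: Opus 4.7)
The plan is to unpack what ``equivalently impactful'' must mean operationally and then reduce the statement to the propagation mechanism already used in the learning phase. Since the paper measures the value of a safety proof by (i) which nodes in the LSS end up marked safe and (ii) how many extra expansions are spent, I will show that any proof which marks $y$ safe automatically marks $x$ safe (and every intermediate node), while incurring no additional expansions beyond those needed to prove $y$.

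First I would exploit the hypothesis $\mathrm{proof}(y) \subset \mathrm{proof}(x)$ to conclude that $y$ lies on $\mathrm{proof}(x)$, so we can write $\mathrm{proof}(x) = \langle x, x_1, \dots, x_{k-1}, y, y_1, \dots, y_l \rangle$ where the suffix starting at $y$ is exactly $\mathrm{proof}(y)$ and the prefix $\langle x, x_1, \dots, x_{k-1}, y \rangle$ lies entirely inside the LSS because $x$ is internal and $y$ is on the frontier. Next I would invoke Theorem~\ref{thm:ez-proof}: every node on that prefix is internal (except $y$ itself, which by hypothesis is being proven), so once $y$ is established as safe, the Dijkstra-like backup propagates the safe label backwards through $x_{k-1}, \dots, x_1, x$ without any further expansions. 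Consequently, after the expansions that prove $y$, the node $x$ is also marked safe.

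Conversely, any proof of $x$ must itself reach a safe node, and in the specific case considered here the minimal such path passes through $y$; thus the expansions required to obtain $\mathrm{proof}(x)$ already include those required to obtain $\mathrm{proof}(y)$. Combining the two directions, the set of nodes marked safe by proving $y$ is a superset of the set marked by proving $x$ (it contains $x$ plus the intermediate $x_i$), and the expansions needed are a subset. I would conclude by noting that in terms of the algorithm's state after the learning phase, proving $y$ achieves at least the same effect as proving $x$, which is what ``equivalently impactful'' formalizes.

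The main obstacle will not be any combinatorial step but rather the terminological one: pinning down ``equivalently impactful'' precisely enough that the claim becomes a corollary of Theorem~\ref{thm:ez-proof} and the propagation rule in the learning phase. Once impact is measured by the pair (safe-labels added, expansions consumed), the containment $\mathrm{proof}(y) \subset \mathrm{proof}(x)$ immediately gives the result; the care lies in arguing that the prefix from $x$ to $y$ lies inside the closed portion of the LSS so that backup is free.
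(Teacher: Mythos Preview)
Your argument is correct, but it is considerably more elaborate than the paper's. The paper's proof is a two-line observation: since $\mathit{proof}(y) \subset \mathit{proof}(x)$, both proofs terminate at the same safe node $z$; safety propagation from $z$ then reaches every predecessor of $z$, in particular both $x$ and $y$, regardless of where the proof started. Hence the set of nodes marked safe is literally the same for either proof, and impact is identical.

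Your route instead decomposes $\mathit{proof}(x)$ into a prefix $\langle x,\dots,y\rangle$ and the suffix $\mathit{proof}(y)$, invokes Theorem~\ref{thm:ez-proof} to argue the prefix costs no expansions, and then compares the pair (safe-labels added, expansions consumed). This works, but two points are worth noting. First, the step ``the prefix lies entirely inside the LSS because $x$ is internal and $y$ is on the frontier'' is not actually entailed by those two facts alone; intermediate nodes of $\mathit{proof}(x)$ could in principle leave the LSS. The paper sidesteps this entirely by reasoning only about the shared terminal node $z$ and the reach of propagation from it. Second, you conclude that proving $y$ achieves \emph{at least} the effect of proving $x$, whereas the paper's formulation gives exact equality of impact, since both proofs discover the same $z$ and hence trigger the same propagation. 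Your added machinery (the expansion-count component, the appeal to Theorem~\ref{thm:ez-proof}) buys a finer-grained cost accounting, but for the statement as phrased the paper's shorter argument suffices.
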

\begin{proof}
Let us say that $\mathit{proof}(x)$ and $\mathit{proof}(y)$ terminate in safe node $z$. Any proof $p$ that terminates in $z$ will result in the propagation of safety from $z$ extending back to all its predecessors including but not limited to both $y$ and its predecessor $x$, regardless of the start node in proof $p$. Therefore, the impact of $\mathit{proof}(y)$ and $\mathit{proof}(x)$ will be identical.
\end{proof}

\begin{theorem}
\label{theorem:safety_coverage}
Let $\mathrm{G_i}$ be a search graph expanded to $\mathrm{i}$ nodes, and let $\mathrm{G_j}$ be a subsequent search graph expanded from $\mathrm{G_i}$ to $\mathrm{j}$ nodes such that $\mathrm{j > i}$. Let $\mathrm{Pr_i \subset P}$ be any set of optimal proofs for all nodes in $\mathrm{G_i}$ that can be proven safe. Finding $\mathrm{Pr_i | G_j}$ requires equal or fewer expansions than finding $\mathrm{Pr_i | G_i}$.
\end{theorem}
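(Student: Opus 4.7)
The plan is to reduce this to Theorem~\ref{thm:ez-proof} by observing that expanding further from $G_i$ only enlarges the closed set, which can only shrink the number of proof nodes that still need to be expanded. First I would establish the monotonicity: since $G_j$ is obtained by continuing expansion from $G_i$, we have $\mathit{closed}(G_i) \subseteq \mathit{closed}(G_j)$, because expansion only moves nodes from the open list into the closed list (and discovers new successors which land on the open list). Consequently, every node that lies internal to $G_i$'s LSS is still internal to $G_j$'s LSS.

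Next, I would fix an arbitrary proof $p = \langle n_1, \ldots, n_k \rangle \in Pr_i$ and let $E_i(p)$ be the multiset of nodes along $p$ that are not yet in $\mathit{closed}(G_i)$, and $E_j(p)$ the analogous set for $G_j$. By Theorem~\ref{thm:ez-proof}, any portion of $p$ that lies entirely inside the closed set requires no expansions to discover; the expansions actually charged for realizing $p$ are exactly those nodes on $p$ that still need to be expanded (i.e.\ lie on the open list or beyond the frontier). From $\mathit{closed}(G_i) \subseteq \mathit{closed}(G_j)$ it follows immediately that $E_j(p) \subseteq E_i(p)$, so realizing $p$ from $G_j$ costs no more expansions than realizing it from $G_i$. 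Summing or unioning over all $p \in Pr_i$ gives the desired inequality regardless of whether shared proof nodes are counted once or multiple times.

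A small technical point that I would handle before concluding is the validity of the proofs themselves. The set $Pr_i$ is defined with respect to $G_i$'s known safe nodes, but proofs are sequences of successor transitions in the underlying state space and are independent of which nodes have been expanded; so every $p \in Pr_i$ remains a well-formed safety proof in $G_j$, and its terminal safe node is still marked safe (safety cached in $G_i$ is retained in $G_j$). The main obstacle, and the only place one could be tripped up, is making precise what ``finding $Pr_i \mid G_j$'' charges as an expansion when different proofs share intermediate nodes; but since the set-containment $E_j(p) \subseteq E_i(p)$ is pointwise in $p$, it lifts to any reasonable aggregation (pointwise sum, union, or the actual algorithmic cost of running proofs sequentially with caching), and the theorem follows.
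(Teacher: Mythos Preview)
Your argument is correct and is actually cleaner than the paper's. The paper proceeds by induction on $j-i$: it analyzes the single additional node $n_j$ expanded in going from $G_i$ to $G_{i+1}$, splits into three cases according to whether $n_j$ lies on some proof in $Pr_i$ and whether it is itself safe, and in the nontrivial case invokes Theorem~\ref{thm:better-proof} to conclude that the remaining proof from $n_j$ is strictly shorter than from any predecessor. It then appeals to induction for general $j>i$.

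You bypass both the induction and the case split by working directly with the monotonicity $\mathit{closed}(G_i)\subseteq\mathit{closed}(G_j)$ and observing, per proof $p\in Pr_i$, that the set of nodes on $p$ still requiring expansion can only shrink. This uses only Theorem~\ref{thm:ez-proof}, not Theorem~\ref{thm:better-proof}, and handles arbitrary $j>i$ in one stroke. What your approach buys is brevity and robustness to the aggregation ambiguity you flag (sum versus union of proof costs); what the paper's approach buys is an explicit picture of what each individual expansion contributes, which ties more visibly into the surrounding narrative about why interleaving proofs with LSS growth wastes effort. Both are valid; yours is the more economical proof of the stated theorem.
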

\begin{proof}
We will first address the case where $j = i + 1$, then show that this proof extends to all cases.
First, the set of nodes $N$ that are internal nodes of $G_i$ (and hence also $G_j$) and whose proofs do not pass through $G_i$'s open list require no additional effort by Theorem \ref{thm:ez-proof}.

Now let us examine the extra node $n_j$ expanded in $G_j$. Note that since $n_j$ was just added to the search graph, it is on the open list of $G_j$. There are 3 possibilities:
\begin{enumerate}
    \item $n_j$ is part of a proof $\mathit{proof}^*(n_k) \in \mathit{Pr}_i$, but $n_j$ is not \textit{explicitly safe}. Theorem \ref{thm:better-proof} proves $\lvert\mathit{proof}^*(n_j)\rvert < \lvert\mathit{proof}^*(n_k)\rvert \; \forall \: k$ where $n_k$ is not on the open list. For any node $n_k$ on the open list whose optimal safety proof passes through $n_j$, the same principle applies in that $\lvert\mathit{proof}^*(n_k)\rvert > \lvert\mathit{proof}^*(n_j)\rvert$ by at least 1.
    \item $n_j$ is part of a proof $\mathit{proof}^*(n_k) \in \mathit{Pr}_i$, and $n_j$ is \textit{safe}. This is a special case of the above where $\lvert\mathit{proof}^*(n_j)\rvert = 0$
    \item $n_j$ is not part of any $\mathit{proof}^*(n_k) \in \mathit{Pr}_i$. In this case the expanded $G_j$ has no effect on the effort of computing $\mathit{Pr}_i$. Note that $n_j \notin G_i$, and therefore computing $\mathit{Pr}_i$ does not require us to find $\mathit{proof}^*(n_j)$.
\end{enumerate}
Clearly, the theorem holds for $i = 1$.  Now by induction, any two graphs $G_i, G_j : j > i, i \geq 1$ will display these same characteristics.
\end{proof}

SafeRTS interleaves exploration and safety proofs during its planning phase. As a direct consequence, it attempts safety proofs on nodes that become internal to the LSS by the end of the search iteration. As shown in Theorem \ref{theorem:safety_coverage}, it would be equally or less difficult to achieve the same or better safety coverage by doing safety proofs after the LSS expansions. SafeRTS has an anytime behavior but does not effectively utilize the real-time bound given.

%% [note] We established that is better to prove after, thus let's get to the algorithm:
\section{A Real-time Framework for Safety}

\begin{algorithm}[t]
\SetKwInOut{Input}{Input}
\Input{$\sroot, \mathit{iterationBound}, \mathit{explorationRatio} < 1$}
\textit{bound} $\leftarrow$ \textit{iterationBound} \label{alg:rtfs:bound} \\
\While{$\mathit{s_{root}} \neq \mathit{s_{goal}}$}{
\textit{explorationBudget} $\leftarrow$ \textit{bound} $*$ \textit{explorationRatio} \label{alg:rtfs:ebudget} \\
\textit{safetyBudget} $\leftarrow$ \textit{bound} $-$ \textit{explorationBudget} \label{alg:rtfs:sbudget} \\

\textit{lss} $\leftarrow$ \textsc{explore}$(\sroot, \mathit{explorationBudget})$ \label{alg:rtfs:explore} \\
\textsc{allocateProof}$(\textsc{selectTarget}, \mathit{lss, safetyBudget})$\label{alg:rtfs:safety}\\
\textsc{propagateH}$(\mathit{lss})$ \label{alg:rtfs:prop:h}\\
\textsc{propagateDeadEnds}$(\mathit{lss})$ \label{alg:rtfs:prop:dead}\\
\textsc{propagateSafety}$(\mathit{lss})$ \label{alg:rtfs:prop:safety}\\
$\starget \leftarrow$ \textsc{selectTarget}$(\mathit{lss})$ \label{alg:rtfs:selectTarget}\\
\If{$\starget$ is \textit{null}}{
\textsc{Terminate} $\lhd$ no safe path is available \label{alg:rtfs:terminate}\\ 
}
move the agent along the path from $\sroot$ to  $\starget$\label{alg:rtfs:move}\\
$\sroot \leftarrow \starget$\\
\textit{bound} $\leftarrow$ \textit{iterationBound} $+$ \textit{unusedBudget} \label{alg:rtfs:boundupdate} \\
}
\caption{Real-time Framework for Safety}
\label{alg:rtfs}
\end{algorithm}

% README:
Given this analysis of SafeRTS, we now introduce as our fourth contribution a general scheme called Real-time Framework for Safety (RTFS). This framework, shown in Algorithm~\ref{alg:rtfs}, composes an algorithm from four elements: a parameter that determines the ratio between goal and safety oriented search, and three main functions: \textsc{explore}, \textsc{allocateProof}, and \textsc{selectTarget}. First, we formalize our notion of these functions.

\begin{definition}
An exploration strategy is a function $\textsc{explore} : s_\mathit{root}, budget \rightarrow G$ that constructs a local search space given a root state and an exploration budget, returning a graph $G$ representing the Local Search Space.
\end{definition}

\begin{definition}
A safe target selection strategy is a function $\textsc{selectTarget} : G \rightarrow \langle n_1, n_2, \dots, n_k \rangle$ that defines a natural ordering for a set of nodes $N$ of size $k$ structured as a Local Search Space graph $G$.

\end{definition}

\noindent Note that when used to retrieve a single target $n_\mathit{target}$, the first node of the ordering is returned.

\begin{definition}
A safety proof allocation strategy is a function $\textsc{allocateProof} : \textsc{selectTarget}, G, \mathit{budget} \rightarrow \bot$ that allocates resources to proving the safety of nodes. It takes a safety target selection strategy, a search graph $G$, and a budget of time to allocate.
\end{definition}

\noindent Such a function is free to allocate resources in any way it chooses, but the $\textsc{selectTarget}$ parameter allows it to prioritize proof effort on nodes that will be chosen as the target to which the agent will commit.

RTFS exploits the real-time bound of the problem to pre-allocate the time spent on exploration and safety proofs [line~\ref{alg:rtfs:ebudget},~\ref{alg:rtfs:sbudget}]. RTFS takes the \textit{explorationRatio} as an input parameter. A higher value allows for more aggressive exploration, but decreases the likelihood of completing any safety proofs. The appropriate value should reflect the total available time per iteration and the difficulty of proving that a node is safe in a given domain.

The \textsc{explore} function defines the way the algorithm uses the expansion budget to build the local search space [line~\ref{alg:rtfs:explore}]. A trivial example of such function is A*, but
any exploration method that is capable of constructing a search tree could be used, such as wA*~\cite{pohl:hsv,rivera:iwi}, GBFS~\cite{pearl:his}, Beam search~\cite{russell:aim}, and Speedy~\cite{burns:hsw}. Using an exploration method that leads to a narrow and deep tree makes each safety proof more consequential as upon a successful proof every ancestor of the node will become safe.
    
Given a search tree, a \textsc{selectTarget} function, and an expansion budget, the \textsc{allocateProof} function distributes the given budget among the frontier nodes of the tree to prove their safety [line~\ref{alg:rtfs:safety}]. Using \textsc{selectTarget}, \textsc{allocateProof} can allocate resources based on the ordering provided. This function is highly non-trivial.

The learning function in line~\ref{alg:rtfs:prop:h} is the same as that of LSS-LRTA*. The dead-end propagation function in line~\ref{alg:rtfs:prop:dead} removes all nodes from the local search tree that were found to be unsafe or whose successors are all unsafe. Lastly, similar to SafeRTS the safety propagation function in line~\ref{alg:rtfs:prop:safety} marks as safe every node with a safe successor as discovered during exploration or as proven safe by \textsc{allocateProof}.

Given a search tree in which the safe and dead-end nodes are marked, the \textsc{selectTarget} function, in line~\ref{alg:rtfs:selectTarget}, selects a node which the agent should commit towards. The authors of SafeRTS described multiple examples for such target selection strategies and claimed best results with the previously described safe-towards-best.

The invocation of these functions might require less time than the given bound for the iteration.
The unused time is used towards the next planning iteration as shown in line~\ref{alg:rtfs:boundupdate}. Alternatively, in domains that do not allow such budget transfer, the \textsc{explore} and \textsc{allocateProof} functions are called with remaining budget distributed between them using the original ratio.

The structure of RTFS is designed to utilize the property proven in Theorem~\ref{theorem:safety_coverage}. The full LSS is constructed before any effort is spent on proving safety, which efficiently allocates available time such that it is more likely to prove more promising nodes than SafeRTS, as we will see below.

\section{Empirical Evaluation}

%% Creat RTFS-0 from SafeRTS

To ascertain the performance gain of RTFS, we create a configuration RTFS-0 with \textit{target selection} and \textit{safety proof allocation} functions that mimic SafeRTS.
SafeRTS allocates at least 50\% of its expansions towards the construction of the LSS, hence we set the \textit{exploration ratio} of RTFS-0 to $0.5$. Though both algorithms select the node on the open list with the lowest $f$ value at the time the proof is attempted, RTFS-0 differs from SafeRTS as it only attempts safety proofs after expanding the LSS. As such, its proofs are only limited by the iteration bound. If a target node is proven to be an implicit dead-end, RTFS-0 removes it and all other discovered dead-end nodes from the graph, then attempts to prove the next best node on the open list.

%% Oracles intro

In our experiments we include two additional oracles, A* and Safe-LSS-LRTA*, to provide reference points. A* is executed offline and its execution time is not included in its GAT. This serves as a lower bound on the GAT and an upper bound on the velocity. Safe-LSS-LRTA* is a version of LSS-LRTA* that has access to an ideal dead-end detector, thus it only considers nodes that are safe. This offers the behavior of an agent-centered real-time search method that only has to focus on reaching the goal.

%% Benchmark problems

To evaluate the performance of our methods, we test them on the racetrack and traffic domains used by \citep{cserna:ade}, along with our new Airspace domain.  In racetrack, a derivative of the benchmark of \citep{barto:lar}, an agent with inertia and limited acceleration traverses a grid.  The agent's state is $\langle x, y, \dot{x}, \dot{y} \rangle$.  Dead-ends are reached if the agent collides with any blocked cell. Notably, the heuristic in this domain encourages the agent to achieve high velocity, as that produces the lowest estimate of GAT. 10 randomly sampled start positions were used for both instances we tested on. In the traffic domain, an extension of the domain used by \citep{kiesel:agq}, a agent moves in a grid, avoiding moving obstacles. A dead-end is reached if an obstacle collides with the agent before it reaches a goal state.
%
% Commitment
In the racetrack and Airspace domains, the planners select one action per planning iteration, while in traffic the planners commit to multiple actions to match the results presented in the SafeRTS paper. All tested planners were able to successfully avoid dead-ends in all benchmark instances, thus our results solely focus on performance indicators such as GAT and velocity.

First, we turn to the comparison of SafeRTS and RTFS-0. 
% Traffic
While both SafeRTS and RTFS-0 solved all instances of the traffic domain, the results are non-conclusive and stochastic. The GAT of both algorithms is highly fluctuating with no algorithm dominating the other.

\begin{figure}[t]
  \centering
   \vspace{-0.5cm}
  \includegraphics[width=0.8\linewidth]{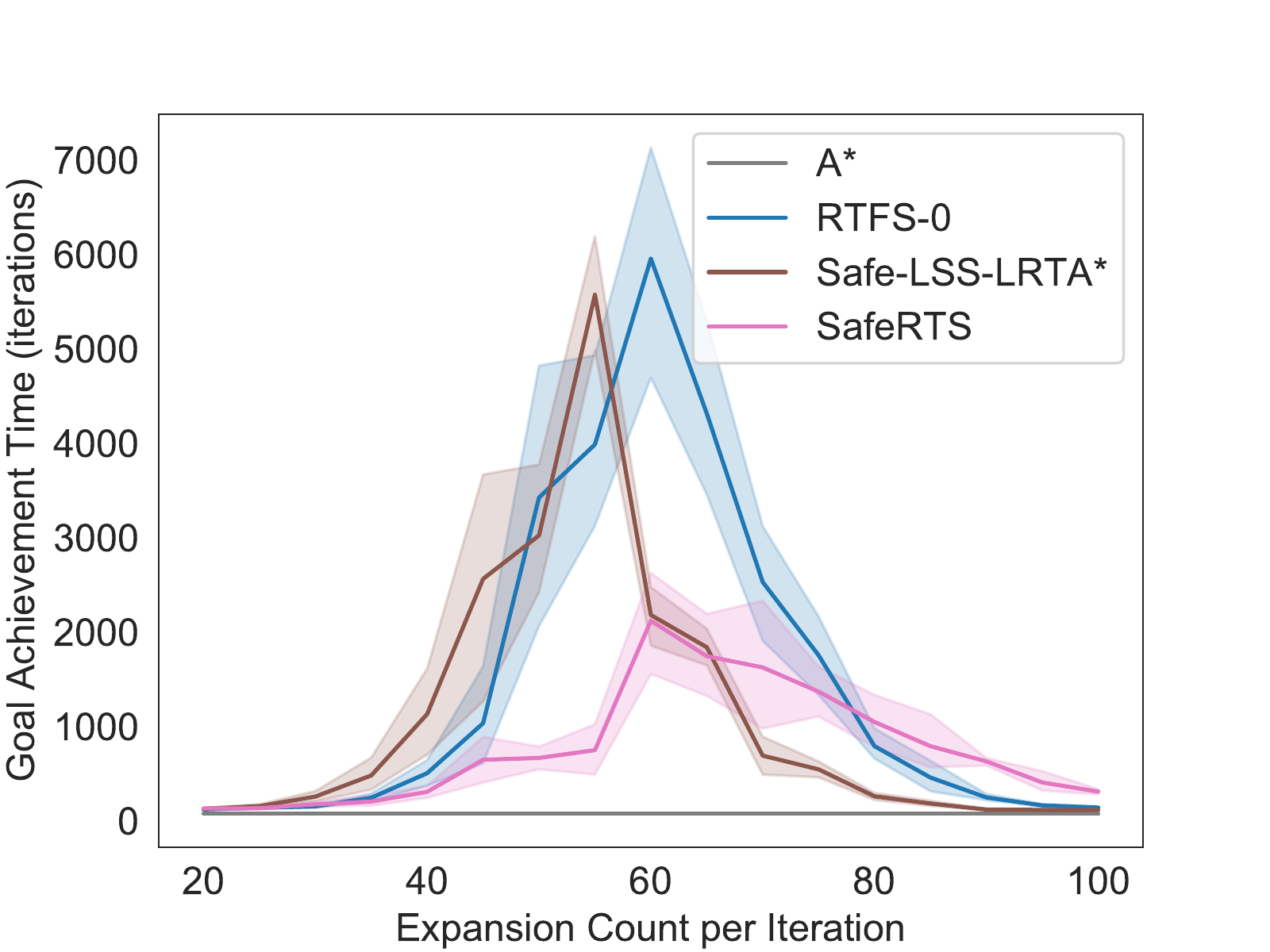}
  \caption{GAT on the Hansen--Barto racetrack.}
  \label{fig:results:racetrack:gat}
\end{figure}

\begin{figure}[t]
  \centering
   \vspace{-0.5cm}
  \includegraphics[width=0.8\linewidth]{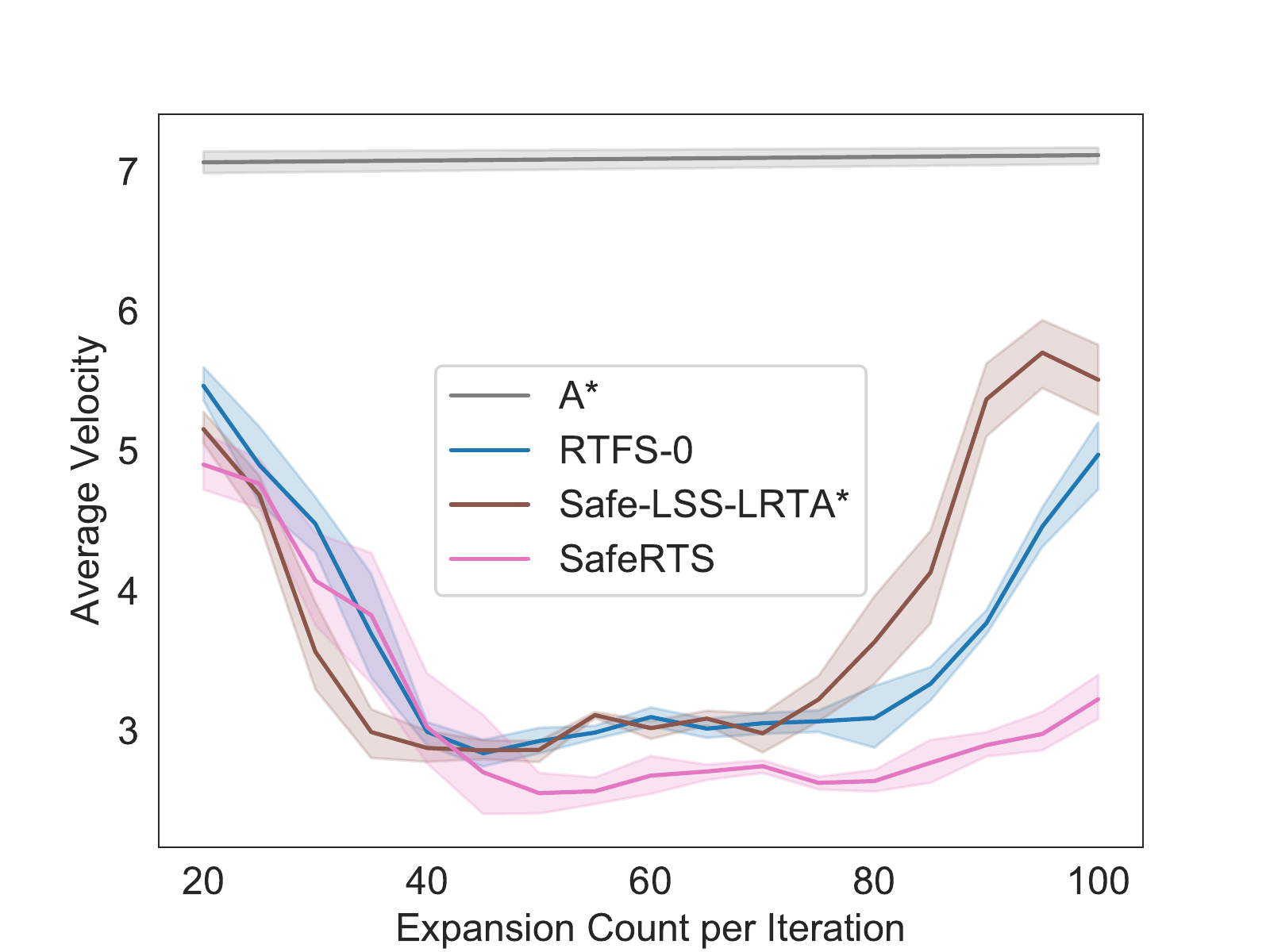}
  \caption{Average velocities on the Hansen--Barto racetrack.}
  \label{fig:results:racetrack:velocity}
\end{figure}

\begin{figure*}[t]
  \centering
   \vspace{-0.2cm}
  \begin{subfigure}{.33\linewidth}
    \centering
    \includegraphics[width=\linewidth]{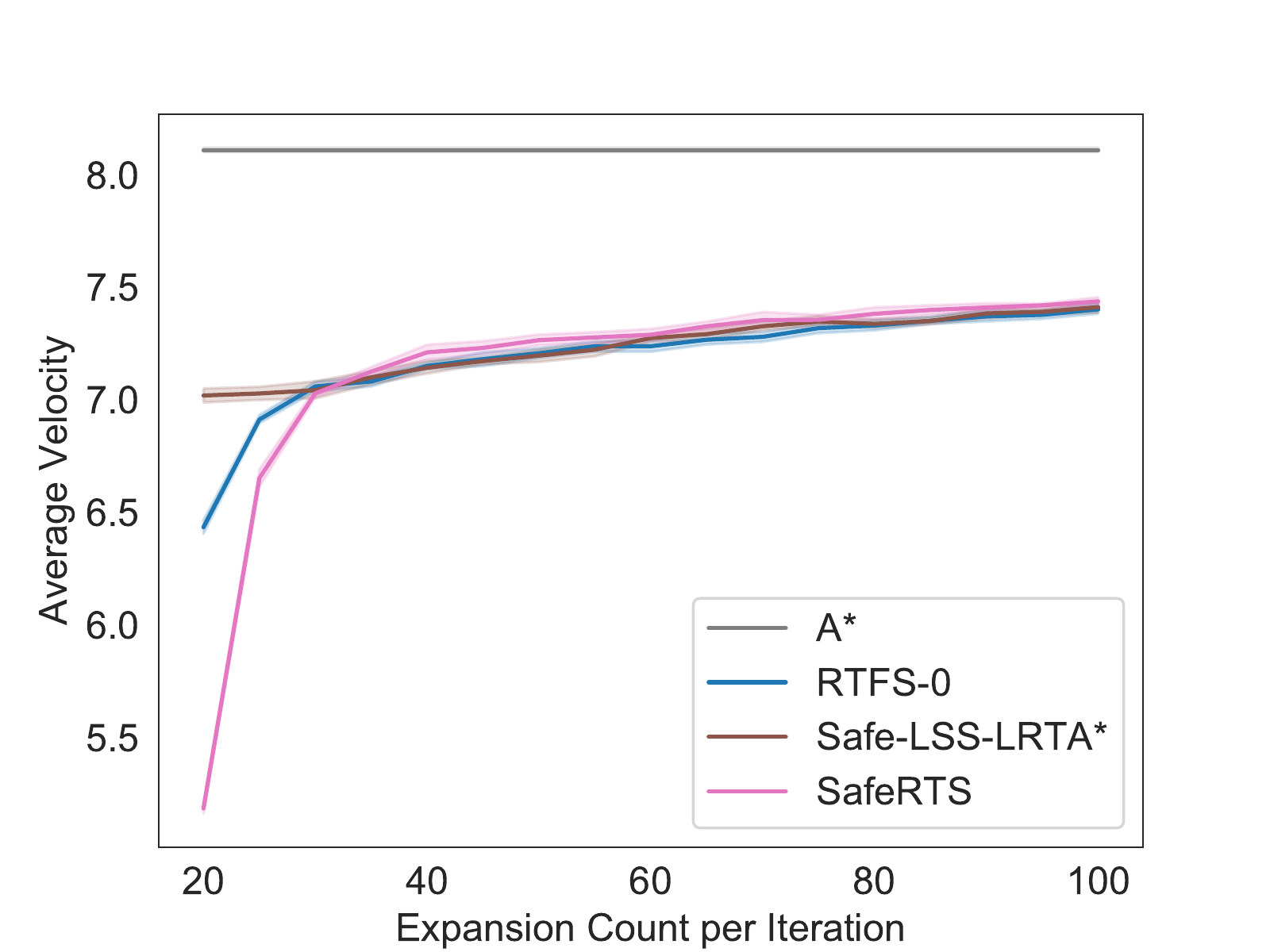}
    \caption{}
    \label{fig:results:airspace:05:limit10}
  \end{subfigure}
  \begin{subfigure}{.33\linewidth}
    \centering
    \includegraphics[width=\linewidth]{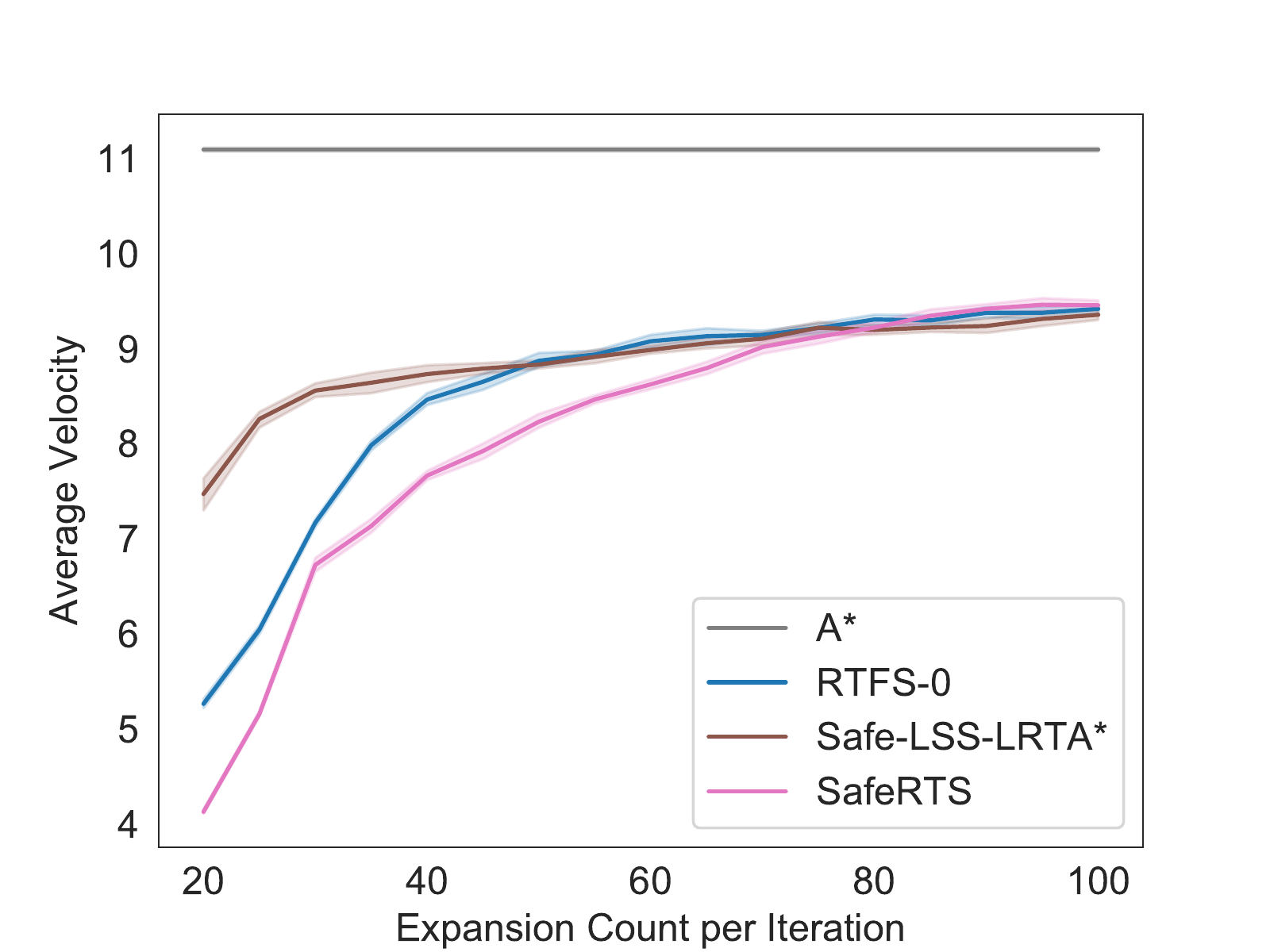}
    \caption{}
    \label{fig:results:airspace:05:limit14}
  \end{subfigure}
  \begin{subfigure}{.33\linewidth}
    \centering
    \includegraphics[width=\linewidth]{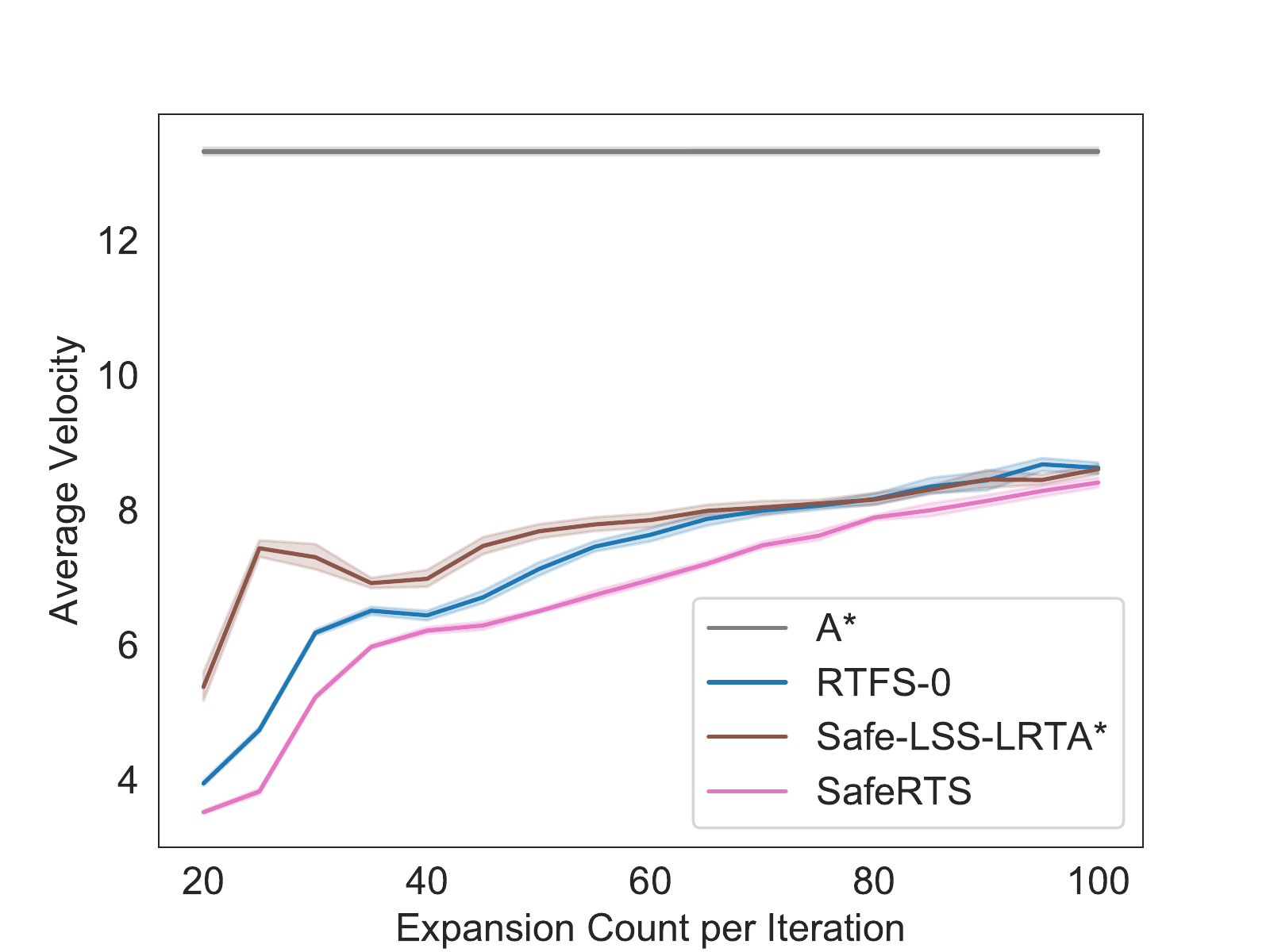}
    \caption{}
    \label{fig:results:airspace:05:limit20}
  \end{subfigure}

 \vspace{-0.2cm}
 
\caption{Average velocity on the Airspace domain of length 100,000 with altitude limits of 10 (a), 14 (b), and 20 (c).}
\label{fig:results:airspace:05}
 \vspace{-0.4cm}
\end{figure*}

\begin{figure*}[t]
  \centering
  \begin{subfigure}{.33\linewidth}
    \centering
    \includegraphics[width=\linewidth]{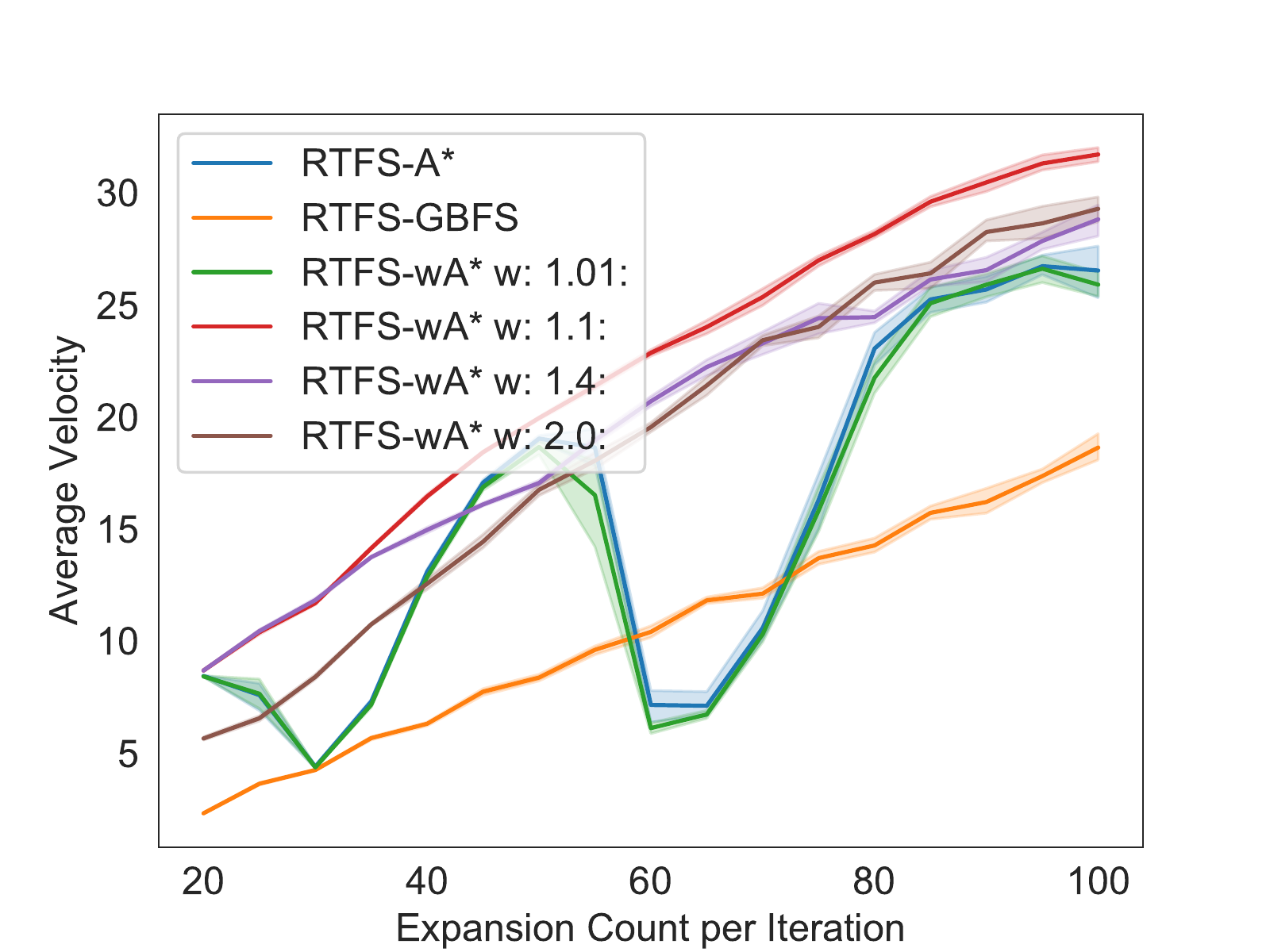}
    \caption{}
    \label{fig:results:airspace:001:explore}
  \end{subfigure}
  \begin{subfigure}{.33\linewidth}
    \centering
    \includegraphics[width=\linewidth]{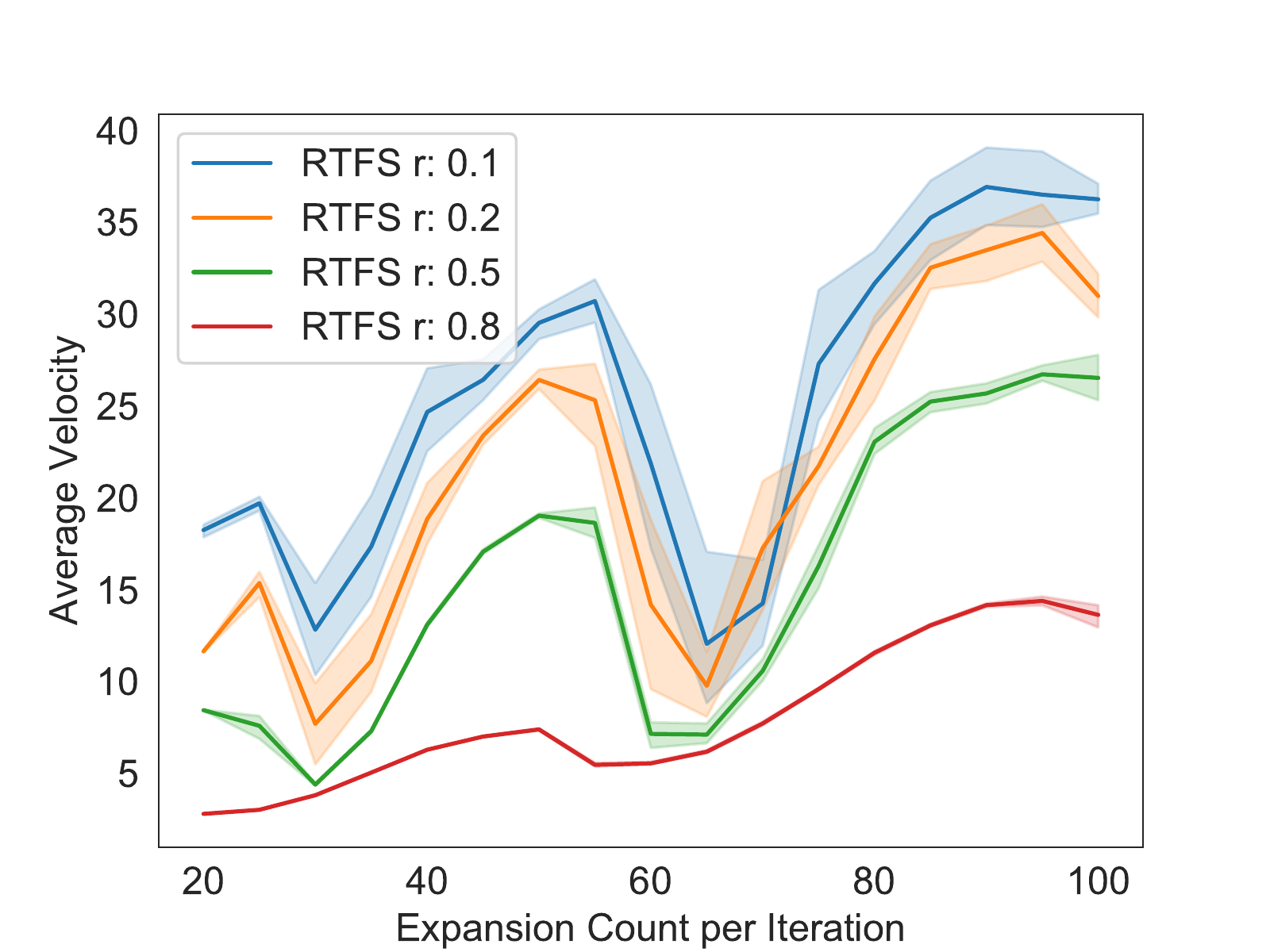}
    \caption{}
    \label{fig:results:airspace:001:ratio}
  \end{subfigure}
  \begin{subfigure}{.33\linewidth}
    \centering
    \includegraphics[width=\linewidth]{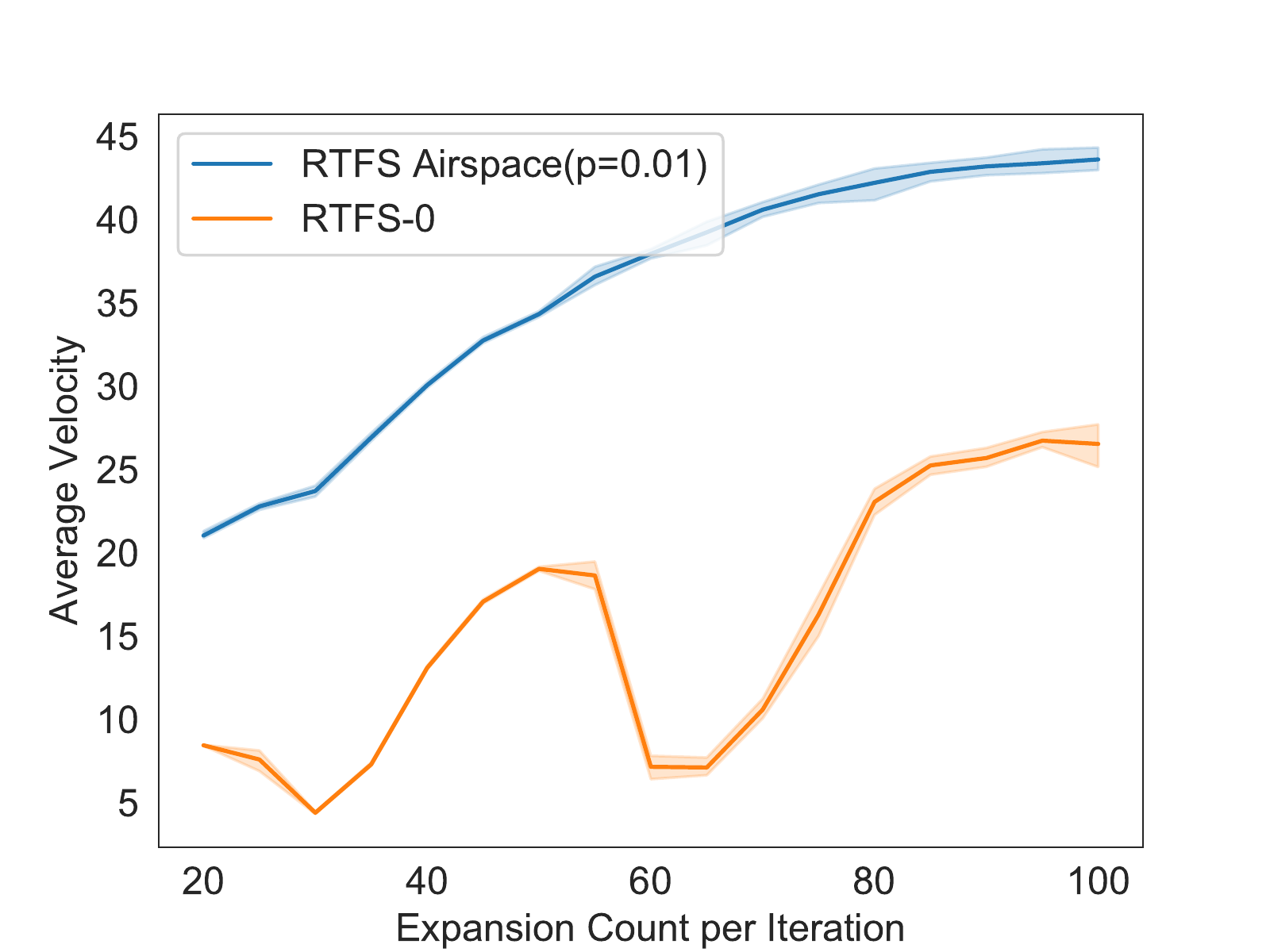}
    \caption{}
    \label{fig:results:airspace:001:composition}
  \end{subfigure}
  
 \vspace{-0.2cm}

\caption{Average velocity of RTFS variants on the Airspace domain of length 100,000 ($p_{\mathit{obs}} = 0.01$)}
\label{fig:results:airspace:001}

%  \vspace{-0.5cm}
\end{figure*}

% Racetrack
Figure~\ref{fig:results:racetrack:gat} shows the GAT on the Hansen-Barto racetrack domain \cite{cserna:ade}. All algorithms perform close to optimal with small action durations, however increasing the action duration drastically decreases the performance of all algorithms. The oracle-based Safe-LSS-LRTA* becomes $50$ times slower when more planning time is provided. We speculate that this is a result of actions with long term effects that lead to significant scrubbing \cite{sturtevant:sdl}. Empirically, the GAT in Figure~\ref{fig:results:racetrack:gat} imply that the agent revisited states many times before finding the path to the goal. The GAT of SafeRTS is superior to RTFS-0 and to Safe-LSS-LRTA* for low to medium lookaheads, which suggests that SafeRTS is simply a better heuristic for this particular domain. Figure~\ref{fig:results:racetrack:velocity} shows the average velocity of the agent, computed as the total distance travelled divided by the travel time. The velocity of RTFS-0 is higher than SafeRTS's and similar to that of the real-time oracle. Results on the uniform racetrack instance were similar thus those results are omitted.  
% Target Selection Success
Both SafeRTS and RTFS-0 use the safe towards best target selection strategy.
%This strategy prefers a node $a$ over node $b$ when $a$'s lowest ranking descendant node on the open list has a lower rank than $b$'s lowest ranking descendant, in case of a tie, the node closer to the open list is selected.
Ultimately, this strategy is intended to commit the agent towards the top node on open if safety can be inferred. RTFS-0 managed to pick the top node in every single experiment on the Hansen-Barto racetrack, while SafeRTS selected targets of descendants of nodes with the average position on the open list of $7.2$ at action duration 20. Yet even though SafeRTS falls far short of selecting the top node, it performs better in this domain, which implies to us that this domain is not suitable for evaluateing the performance of safety oriented methods.

% Airspace: RTFS-0 vs SafeRTS

To further evaluate the performance of RTFS-0 and SafeRTS, We created 10 random Airspace instances with a horizontal distance of 100,000, maximum altitudes of 10, 14, 20, and $p_{\mathit{obs}} = 0.05$. In this domain we only focus on the average horizontal velocity as it directly corresponds to the GAT.

Finding a solution in Airspace is trivial as it can be completed by navigating the agent at the obstacle free altitude $1$. Finding a good solution is increasingly more difficult as the altitude limit increases. Figure~\ref{fig:results:airspace:05} shows the convergence of methods towards the oracle real-time search, and the average velocity shows a clear increasing trend as the time available per iteration increases. The convergence of SafeRTS and RTFS-0 slows down as the difficulty of the problem increases (Figure~\ref{fig:results:airspace:05:limit20}). RTFS-0 has faster average velocity and it closes the gap faster than SafeRTS.
%
% TODO point out the variance difference between the domains
%
We demonstrate the flexibility of RTFS by instantiating variants of it using two of the 4 degrees of freedom and evaluate them on 5 Airspace domains instances with a less dense obstacle probability of ($p_{\mathit{obs}} = 0.05$). The upper velocity bound achievable by A* is $70$. We investigate the effect of different exploration functions(A*, Weighted-A*, and Greedy Best First Search (GBFS)) as well as the impact of different \textit{explorationRatio}s.
Deviating from a locally optimal A* exploration of RTFS--0 can not only improve the average velocity but can reduce the variance of the outcome as shown on Figure~\ref{fig:results:airspace:001:explore}. In this context, RTFS--0 is denoted as RTFS-A*. The performance of RTFS-A* plummets periodically as the size of the search frontier and the available expansions are aligned in a way that a low performing node is selected. This is likely a consequence of selecting nodes from incomplete $f$ layers \cite{kiesel:agq}. Weighted A* seems to break this alignment by making the search tree deeper. Additionally, the overall performance improved up to $20$\% when a weight of $1.1$ is used. Further increases in the weight reduced the average velocity, converging the performance of GBFS, which achieved the lowest average velocity in our experiments.
In addition to the exploration function, we tested a range of \textit{explorationRatio}s, that determines how much time should be spent on exploration and safety proofs in each iteration. \textit{explorationRatio} $= 0.1$, or $r = 0.1$ for short, means that 10\% of the time is spent on expansions. Figure~\ref{fig:results:airspace:001:ratio} shows that decreasing the exploration time increased the performance as higher altitudes require longer and more difficult safety proofs. However, it also amplified the fluctuations discussed above.
Lastly, the performance of RTFS composed from the wA* ($w = 1.1$) exploration function and an \textit{explorationRatio} of $0.1$ is shown on Figure~\ref{fig:results:airspace:001:composition}. The remaining function are the same as in RTFS-0. While the union of these modifications increases the performance by 100\%, it only demonstrates the flexibility of RTFS and it is not intended to serve as a general recommendation of this particular configuration.

\section{Discussion}

% Things to say here:
% We acknowledge that we have not addressed these function,
% there has been prior work on both
% it is an interesting research direction which does not fit this paper
% but we would encourage others to investigate

While the above results are encouraging, it is important to note that simplistic target selection and safety allocation strategies were used in RTFS with the intention of matching SafeRTS for better comparison.
The topic of selecting the node to commit to is an issue fundamental to online real-time planning, and a deep investigation is outside the scope of this paper, beyond ensuring that the node selected is \textit{safe}. Resource allocation for safety is similar to the problem of a parallel portfolio of algorithms: we may choose any of a number of promising frontier nodes on which to attempt a proof. The safety allocation problem may have additional constraints in that we prefer to prove nodes based on the ordering provided by the target selection strategy, but it is not always clear when a proof of a node should be abandoned or not attempted in the first place in favor of some other promising node which may be easier to prove.
% we should acknowledge that we did not propose a new method to solve these as they are difficult problems on their own and would warrant further research.
Learning based methods have been proposed to address these settings \cite{oceallaigh:mrh,cserna:ptt,petrik:lppa}.
% While these methods may afford improvements over those used in experimental results, their investigation is out of the scope of this work. We hope our results encourage further research in these directions.

\section{Conclusion}

This work has four contributions. First we introduced a new domain with dead-ends called Airspace that minimizes the long term effects of actions and was designed specifically to evaluate safe real-time methods.
Second, we showed that the simple method of caching dead-ends provided a mild performance improvement.
Third, we proved that proving safety is more effective when it is done after expanding the local state space. Lastly, we combined these finding into a flexible planning framework, RTFS, to address real-time planning in the presence of dead-ends.
We demonstrated that, when configured like SafeRTS, RTFS provides improved performance. Unlike SafeRTS, RTFS can be tuned to each domain to achieve higher performance, and thus is a flexible model for safe real-time search.

We hope this work encourages further research on avoiding dead-ends in the online planning setting.

% Potential future direction is to extend safety to non-agent centric real-time search methods

\bibliographystyle{aaai}
\bibliography{master}

\begin{thebibliography}{}

\bibitem[\protect\citeauthoryear{Barto, Bradtke, and Singh}{1995}]{barto:lar}
Barto, A.~G.; Bradtke, S.~J.; and Singh, S.~P.
\newblock 1995.
\newblock Learning to act using real-time dynamic programming.
\newblock {\em Artificial Intelligence} 72(1):81--138.

\bibitem[\protect\citeauthoryear{Burns, Ruml, and Do}{2013}]{burns:hsw}
Burns, E.; Ruml, W.; and Do, M.~B.
\newblock 2013.
\newblock Heuristic search when time matters.
\newblock {\em Journal of Artificial Intelligence Research} 47:697--740.

\bibitem[\protect\citeauthoryear{Cserna \bgroup et al\mbox.\egroup
  }{2018}]{cserna:ade}
Cserna, B.; Doyle, W.~J.; Ramsdell, J.~S.; and Ruml, W.
\newblock 2018.
\newblock Avoiding dead ends in real-time heuristic search.
\newblock In {\em Proceedings of the Thirty-Second AAAI Conference on
  Artificial Intelligence (AAAI-18)}.

\bibitem[\protect\citeauthoryear{Cserna, Ruml, and Frank}{2017}]{cserna:ptt}
Cserna, B.; Ruml, W.; and Frank, J.
\newblock 2017.
\newblock Planning time to think: Metareasoning for on-line planning with
  durative actions.
\newblock In {\em Proceedings of the Twenty-seventh International Conference on
  Automated Planning and Scheduling (ICAPS-17)}.

\bibitem[\protect\citeauthoryear{Kiesel, Burns, and Ruml}{2015}]{kiesel:agq}
Kiesel, S.; Burns, E.; and Ruml, W.
\newblock 2015.
\newblock Achieving goals quickly using real-time search: experimental results
  in video games.
\newblock {\em Journal of Artificial Intelligence Research} 54:123--158.

\bibitem[\protect\citeauthoryear{Koenig and Sun}{2009}]{koenig:crt}
Koenig, S., and Sun, X.
\newblock 2009.
\newblock Comparing real-time and incremental heuristic search for real-time
  situated agents.
\newblock {\em Autonomous Agents and Multi-Agent Systems} 18(3):313--–341.

\bibitem[\protect\citeauthoryear{Korf}{1990}]{korf:rth}
Korf, R.~E.
\newblock 1990.
\newblock Real-time heuristic search.
\newblock {\em Artificial Intelligence} 42:189--211.

\bibitem[\protect\citeauthoryear{O'Ceallaigh and Ruml}{2015}]{oceallaigh:mrh}
O'Ceallaigh, D., and Ruml, W.
\newblock 2015.
\newblock Metareasoning in real-time heuristic search.
\newblock In {\em Proceedings of the Symposium on Combinatorial Search
  ({SoCS}-15)}.

\bibitem[\protect\citeauthoryear{Pearl}{1984}]{pearl:his}
Pearl, J.
\newblock 1984.
\newblock {\em Heuristics: Intelligent Search Strategies for Computer Problem
  Solving}.
\newblock Addison-Wesley.

\bibitem[\protect\citeauthoryear{Petrik and Zilberstein}{2006}]{petrik:lppa}
Petrik, M., and Zilberstein, S.
\newblock 2006.
\newblock Learning parallel portfolios of algorithms.
\newblock {\em Annals of Mathematics and Artificial Intelligence}
  48(1-2):85--106.

\bibitem[\protect\citeauthoryear{Pohl}{1970}]{pohl:hsv}
Pohl, I.
\newblock 1970.
\newblock Heuristic search viewed as path finding in a graph.
\newblock {\em Artificial Intelligence} 1:193--204.

\bibitem[\protect\citeauthoryear{Rivera, Baier, and
  Hernández}{2015}]{rivera:iwi}
Rivera, N.; Baier, J.~A.; and Hernández, C.
\newblock 2015.
\newblock Incorporating weights into real-time heuristic search.
\newblock {\em Artificial Intelligence} 225:1 -- 23.

\bibitem[\protect\citeauthoryear{Russell and Norvig}{2010}]{russell:aim}
Russell, S., and Norvig, P.
\newblock 2010.
\newblock {\em Artificial Intelligence: A Modern Approach}.
\newblock Prentice Hall, third edition.

\bibitem[\protect\citeauthoryear{Sturtevant and Bulitko}{2016}]{sturtevant:sdl}
Sturtevant, N.~R., and Bulitko, V.
\newblock 2016.
\newblock Scrubbing during learning in real-time heuristic search.
\newblock {\em Journal of Artificial Intelligence Research} 57:307--343.

\end{thebibliography}

\end{document}